\let\cite=\citep
\newcommand{\mathbbm}[1]{\text{\usefont{U}{bbm}{m}{n}#1}}
\newtheorem{definition}{Definition}
\newtheorem{theorem}{Theorem}
\newcommand{\Eq}[1]{Eq.~(\ref{eq:#1})}
\newcommand{\norm}[1]{\left\lVert#1\right\rVert}
\newcommand{\Dcal}[0]{\mathcal{D}}
\newcommand{\Mcal}[0]{\mathcal{M}}
\newcommand{\Scal}[0]{\mathcal{S}}
\def\eqref#1{equation~\ref{#1}}
\def\1{\bm{1}}
\def\vtheta{{\bm{\theta}}}
\def\vphi{{\bm{\phi}}}
\def\vx{{\bm{x}}}
\def\vy{{\bm{y}}}
\def\mI{{\bm{I}}}
\def\mX{{\bm{X}}}
\DeclareMathAlphabet{\mathsfit}{\encodingdefault}{\sfdefault}{m}{sl}
\SetMathAlphabet{\mathsfit}{bold}{\encodingdefault}{\sfdefault}{bx}{n}
\DeclareMathOperator*{\argmax}{arg\,max}
\begin{document}
%
\runningtitle{SIFU: Efficient and Provable Client Unlearning in Federated Optimization}

%
\runningauthor{Fraboni, Van Waerebeke, Vidal, Kameni, Scaman, Lorenzi}

\twocolumn[

\aistatstitle{SIFU: Sequential Informed Federated Unlearning
for\\ Efficient and Provable Client Unlearning in Federated Optimization}



\aistatsauthor{Yann Fraboni$^\textbf{*}$ \And Martin Van Waerebeke$^\textbf{*}$ \And Richard Vidal}

\aistatsaddress{Accenture Labs \\ INRIA Sophia-Antipolis \And INRIA Paris \\ DIENS - PSL \And Accenture Labs} 

\aistatsauthor{Laetitia Kameni \And Kevin Scaman \And Marco Lorenzi}

\aistatsaddress{Accenture Labs \And INRIA Paris \\ DIENS - PSL \And INRIA Sophia-Antipolis}
]

\begin{abstract}
Machine Unlearning (MU) is an increasingly important topic in machine learning safety, aiming at removing the contribution of a given data point from a training procedure. Federated Unlearning (FU) consists in extending MU to unlearn a given client’s contribution from a federated training routine. While several FU methods have been proposed, we currently lack a general approach providing formal unlearning guarantees to the \textsc{FedAvg} routine, while ensuring scalability and generalization beyond the convex assumption on the clients’ loss functions. We aim at filling this gap by proposing SIFU (Sequential Informed Federated Unlearning), a new FU method applying to both convex and non-convex optimization regimes. SIFU naturally applies to \textsc{FedAvg} without additional computational cost for the clients and provides formal guarantees on the quality of the unlearning task. We provide a theoretical analysis of the unlearning properties of SIFU, and practically demonstrate its effectiveness as compared to a panel of unlearning methods from the state-of-the-art.
\end{abstract}

\section{Introduction}

With the emergence of new data regulations, such as the EU General Data Protection Regulation (GDPR) \cite{GDPR} and the California Consumer Privacy Act (CCPA) \cite{CCPA}, the storage and processing of sensitive personal data is often the subject of strict constraints and restrictions. In particular, the “right to be forgotten” states that personal data must be erased upon request from the concerned individuals, with subsequent potential implications on machine learning models trained by using this data.  
Machine Unlearning (MU) is an emerging discipline that studies methods that aim at removing the contribution of given data instances used to train a machine learning model.
Current MU approaches are essentially based on routines that modify the model weights in order to guarantee the “unlearning" of a given data point, i.e. to obtain a model equivalent to a hypothetical one trained  without this data point \cite{Cao2015TowardsMS,SISA}. 

Motivated by data governance and confidentiality concerns, 
Federated Learning (FL) has gained popularity in the last years to allow data owners to collaboratively learn a model without sharing their respective data. 
Among the different FL approaches, federated averaging (\textsc{FedAvg}) has emerged as the most popular optimization scheme \cite{FedAvg}. An optimization round of \textsc{FedAvg} requires data owners to receive the current global model from the server, which is updated 
by performing a fixed amount of Stochastic Gradient Descent (SGD) steps 
before sending back the resulting model. The new global model is then created as the weighted average of the client updates. 
The FL communication design ensures clients that their data is solely used to compute their model update, while established theory guarantees convergence of the model to a stationary point of the clients' joint optimization problem \cite{FedNova,OnTheConvergence}.

With the current deployments of FL in the real-world, it is of crucial importance to extend MU to Federated Unlearning (FU), for guaranteeing the unlearning of clients wishing to opt-out from a collaborative training routine. 
This is not straightforward, since most current MU schemes have been proposed in the centralized learning setting, and cannot be seamlessly applied to the federated one. Typical issues include the need for exchanging high-order quantities related to the model parameters, or additional and potentially sensitive client information \cite{CertifiedDataRemoval,ApproximateDataDeletion,Golatkar_2020_CVPR,golatkar2020forgetting,Golatkar_2021_CVPR}.
While several Federated Unlearning (FU) methods have been proposed, few are backed by theoretical guarantees on the effectiveness of unlearning \cite{liu2022fedun, jin2023ntk}, or are compatible with typical FL assumptions on data access or availability \cite{FedEraser}.


To address these shortcomings, in this work we introduce \textbf{Sequential Informed Federated Unlearning (SIFU)}, a novel efficient FU approach to remove clients' contributions from the federated model with quantifiable unlearning guarantees. SIFU is compatible with \textsc{FedAvg}-based training and requires minimal additional computations from the server \textit{and none from the clients}. Specifically, at every round of FL optimization, the server quantifies the norm of each client's contribution to the global model.
Upon receiving an unlearning request from a client, the server retrieves the iteration at which the client's contribution exceeds a pre-defined unlearning budget from the FL training history, and initializes the unlearning procedure from the associated intermediate global model. Unlearning guarantees are provided by introducing a novel randomized mechanism to perturb the selected intermediate model with client-specific noise. We develop a theory demonstrating the unlearning capabilities of SIFU in both convex and non-convex FL optimization settings.
We first introduce IFU (Informed FU) to account for only one unlearning request, and then generalize it to SIFU, which can handle an arbitrary number of sequential requests, as is done in \citet{DescentToDelete,AdaptiveMachineUnlearning}.

This manuscript is structured as follows.
In Section \ref{sec:background}, we provide formal definitions for MU, FL, and FU, and introduce the state-of-the art of FU.
In Section \ref{sec:theory}, we introduce sufficient conditions for IFU to unlearn a client from the FL routine (Theorem \ref{theo:noise_DP}). In Section \ref{sec:SIFU}, we extend IFU to the sequential unlearning setting with Sequential IFU (SIFU).  
Finally, in Section \ref{sec:experiments}, we experimentally demonstrate on different tasks and datasets that SIFU leads to more efficient unlearning procedures as compared to basic re-training and state-of-the-art FU approaches.
\section{Background and Related Work}
\label{sec:background}

 Sections \ref{subsec:unlearning_baselines} and \ref{subsec:FedAvg} respectively introduce the basic concepts of MU and FL. The state-of-the-art on FU is discussed in \ref{subsec:federated_unlearning}.


\subsection{Machine Unlearning}
\label{subsec:unlearning_baselines}
Let us consider a dataset $\Dcal$ composed of two disjoint datasets: $\Dcal_f$, the cohort of data samples on which unlearning must be applied after FL training, and $\Dcal_k$, the remaining data samples. Hence, we have $\Dcal = \Dcal_f \sqcup \Dcal_k$. 
We also consider $\Mcal(\Dcal)$, the ML model parameters resulting from training with optimization scheme $\Mcal$ on dataset $\Dcal$. 
We introduce in this section the different unlearning baselines and methods typically used to unlearn $\Dcal_f$ from the trained model $\Mcal(\Dcal)$. 

\textbf{MU through retraining.}
Within this setting, a new training is performed from scratch with only $\Dcal_k$ as training data.
Retraining from scratch is a typical MU baseline as it provides unlearning by construction, albeit with a generally high computational cost.

\textbf{MU through fine-tuning.} 
Fine-tuning on the remaining data $\Dcal_k$ has been proposed as a practical approach to unlearn the specificities of $\Dcal_f$. This is a common MU baseline \cite{jia2023model}, with however no unlearning guarantees (Appendix \ref{app:sec:initialization}). 

\textbf{MU through model scrubbing.}
Another unlearning approach consists in applying a “scrubbing" transformation $h$ to the model $\Mcal(\Dcal)$ such that the resulting model is as close as possible to the one that would be trained with only $\Dcal_k$, i.e. $h(\Mcal(\Dcal)) \approx \Mcal(\Dcal_k)$ \cite{MakingAIForgetYou}. 
Existing work mostly relies on the quadratic approximation of the loss function to define the scrubbing method $h$ as
\begin{equation}
h_{\Dcal_k}(\vtheta)
= \vtheta - H_{\Dcal_k}^{-1}(\vtheta) \nabla f_{\Dcal_k}(\vtheta)
, \label{eq:scrubbing_method}
\end{equation}
where $H_{\Dcal_x}(\vtheta)$ is the Hessian of the loss function evaluated on the remaining data points $\Dcal_k$.
With equation (\ref{eq:scrubbing_method}), $h$ reduces to performing a Newton step, and has been derived in previous MU works under different theoretical assumptions that can be generalized by considering a quadratic approximation of the loss function \cite{CertifiedDataRemoval,ApproximateDataDeletion,Golatkar_2020_CVPR,golatkar2020forgetting,Golatkar_2021_CVPR,CertifiableMchineUnlearning}. 
The main drawback behind the use of the scrubbing function (\ref{eq:scrubbing_method}) is the estimation of the Hessian, which can be both intractable for large models and prone to information leakage.
Finally, 
the scrubbing function (\ref{eq:scrubbing_method}) is often coupled with Gaussian noise perturbation on the resulting weights, to compensate the quadratic approximation of the loss function, or the approximation of the Hessian \cite{Golatkar_2020_CVPR,golatkar2020forgetting,Golatkar_2021_CVPR}.

\textbf{MU through noise perturbation.}
This unlearning method consists in randomly perturbating the trained model $\Mcal(\Dcal)$ to unlearn specificities from data samples in $\Dcal_f$ \cite{DescentToDelete,AdaptiveMachineUnlearning,mahadevan2021certifiable}. 
The noise is set such that the guarantees of Definition \ref{def:DP_adpated} are satisfied, where $(\epsilon, \delta)$ are parameters quantifying the unlearning guarantees.

\begin{definition}
    \label{def:DP_adpated}  	
    Let $f_m$ be a randomized mechanism taking model parameters as an input. $(\epsilon, \delta)$-unlearning  trough $f_m$ of a data point $\{x_m, y_m\}$ from a model $\Mcal(D)$  is achieved if, for any subset $\Scal$ of the model parameters space and $D_{-m} \coloneqq D \setminus \{x_m, y_m\}$, we have
	\begin{equation}
	\mathbb{P}(f_m(\Mcal(D)) \in \Scal) 
	\le e^\epsilon \mathbb{P}(f_m(\Mcal(D_{-m})) \in \Scal)  + \delta
	\end{equation}
	\begin{equation}
	\text{and \,}
	\mathbb{P}(f_m(\Mcal(D_{-m})) \in \Scal) 
	\le e^\epsilon \mathbb{P}(f_m(\Mcal(D)) \in \Scal)  + \delta
	.
	\end{equation}
\end{definition}

\cite{CertifiedDataRemoval} shows the relationship between Definition \ref{def:DP_adpated} and the randomized mechanism in Differential Privacy \cite{DP_book,UnderstandingGradientClipping}.

\subsection{Federated Optimization and \textsc{FedAvg}}
\label{subsec:FedAvg}
In FL, we consider a learning setup with $M$ clients, and define $I= \{1, ..., M\}$ as the set of indices of the participating clients. Each client owns a dataset $D_i$ composed of $|D_i| = n_i$ data samples.
We consider a loss $f(\vx_{i, l}, \vy_{i, l}, \vtheta)$ assessed on each data sample $(\vx_{i, l}, \vy_{i, l}) \in D_i$, and define a client's loss function as $f_i(\vtheta) \coloneqq 1/n_i \sum_{l=1}^{n_i} f(\vx_{i, l}, \vy_{i, l}, \vtheta)$. 
We define for the joint dataset $D_I \coloneqq \cup_{i \in I} D_i$ the federated loss function
\begin{equation}\label{eq:empirical_loss}
f_{I}(\vtheta)
\coloneqq \frac{1}{|D_I|} \sum_{i \in I} |D_i| f_i(\vtheta)
.
\end{equation}
\textsc{FedAvg} \cite{FedAvg} optimizes the loss  (\ref{eq:empirical_loss}) 
with theoretical guarantees for FL convergence to a stationary point \cite{FedNova,OnTheConvergence}.
Following Algorithm \ref{alg:FedAvg}, at each step $n$, the server sends the current global model parameters $\vtheta^n$ to the clients. Each client updates the model by minimizing its local cost function $f_i(\vtheta)$ with $K$ SGD steps initialized with $\vtheta^n$. Subsequently each client returns the updated local parameters ${\vtheta}_i^{n+1}$ to the server. The global model parameters $\vtheta^{n+1}$ at iteration step $n+1$ are then estimated by aggregating the clients' contributions, i.e.
\begin{equation}
\label{eq:FedAvg_server_aggregation}
\vtheta^{n+1}= \vtheta^n + \omega(I, \vtheta^n), 
\end{equation}
where $\omega(I, \vtheta^n)=\frac{1}{|D_I|} \sum_{i \in I} |D_i| \left[\vtheta_i^{n+1} - \vtheta^n\right]$. 

In what follows, we consider \textsc{FedAvg} as reference FL framework, due to the wide adoption of this scheme in the literature, and the lower sensitivity to information leakage as opposed to FedSGD \cite{geng2023improved}.

\begin{algorithm}[t]
	\caption{\textsc{FedAvg}$(I, N)$}\label{alg:FedAvg}
\begin{algorithmic}[1]
	\FOR{$n$ from 0 to $N-1$}
	\STATE The server sends $\vtheta^n$ to every client in $I$.
	\STATE Clients perform $K$ SGDs to compute $\vtheta_i^{n+1}$.
	\STATE The server creates $\vtheta^{n+1}$, equation (\ref{eq:FedAvg_server_aggregation}).
	\ENDFOR
        \RETURN the trained global model $\vtheta^N$ 
\end{algorithmic}
\end{algorithm}

\subsection{Federated Unlearning}
\label{subsec:federated_unlearning}

We first note that while MU through retraining and fine-tuning naturally generalize to FU, this not the case for most MU methods, because of typical data access restrictions of FL. 
While a variety of FU methods have been recently proposed, only a few of them offer theoretical proofs of their unlearning capabilities. These include FU methods tailored to clustering tasks \cite{pan2022machine}, linear regressions \cite{li2020online} and class-unlearning \cite{wang2021federated}. Concerning the problem of FU compatible with the \textsc{FedAvg} routine, some recent works develop theories for model scrubbing  \cite{liu2022fedun, jin2023ntk}. Nevertheless, the working assumptions of these methods may be too restrictive as they require the client-wise computation and availability of the model's Hessian, or are based on the existence of independent data available to the server. Other limitations of recent FU methods concern the need for accessing the data to be unlearned after the unlearning request \cite{halimi2022federated}. FedEraser \cite{FedEraser} is a recent method compatible with \textsc{FedAvg} based on adaptive retraining. While this approach has been shown to outperform the retraining baseline, it is not backed by theoretical guarantees.
The ensemble of working hypothesis of these methods is summarized in Table \ref{tab:FU_SOTA}.
As can be observed, there is a lack of a theoretically-proven approaches working with \textsc{FedAvg}, especially in the non-convex setting. With these limitations in mind, in what follows we introduce our contribution.


\begin{table*}
\small
    \caption{Summary of FU methods from state-of-the-art, with related working assumptions.}
    \centering
    \renewcommand{\arraystretch}{0.9} 
    \begin{tabular}{
        c!{\vrule width 1.5pt}
        c!{\vrule width 0.5pt} 
        c|c!{\vrule width 1.5pt} 
        c|c@{\hspace{0.1cm}}
        c!{\vrule width 1.5pt}
        c|c
    }
        \Xhline{0\arrayrulewidth}
        & & & & \multicolumn{2}{c!{\vrule width 0pt}}{FU does NOT require data from} \\[-\normalbaselineskip]
        \multirow{3}{*}{} & \multirow{0.25}{2cm}{\parbox[c]{2cm}{\centering \textsc{FedAvg}\\ compatible}} & \multicolumn{2}{c!{\vrule width 1.5pt}}{Theoretical guarantees} &  & \\
        \cline{3-4} \cline{5-6}
        & & convex & non-convex & the server & the clients to unlearn \\
        \Xhline{4\arrayrulewidth}
        \citet{liu2022fedun} &  &  \checkmark  &    &  \checkmark  &   \\
        \citet{BFU-SS} &  &    &    &  \checkmark  &  \checkmark \\
        \citet{gong2022forget} &  &    &    &  \checkmark  &  \checkmark \\
        \citet{wu2022federated} & \checkmark &    &   &    &  \checkmark \\
        \citet{halimi2022federated} & \checkmark &    &    &  \checkmark  &   \\
        \citet{FedEraser} & \checkmark &    &    &  \checkmark  &  \checkmark \\
        \citet{jin2023ntk} & \checkmark &  \checkmark  &    &    &   \\
        \textbf{SIFU} (ours) & \checkmark & \checkmark   &  \checkmark  & \checkmark   &  \checkmark \\
        \Xhline{0\arrayrulewidth}
    \end{tabular}
    \label{tab:FU_SOTA}
\end{table*}





\section{Unlearning a single client with IFU}
\label{sec:theory}

In this section, we develop our theory for the scenario where a model is trained with \textsc{FedAvg} on the set of clients $I$, after which a client $c$ requests unlearning of its data. 
In Section \ref{subsec:bounding_drift}, we define the sensitivity of the global model with respect to a client's contribution, with an associated bound for both convex and non-convex regimes.
Using Theorem \ref{theo:diff_bound}, we introduce the perturbation procedure in Section \ref{subsec:unlearning_guarantees} to unlearn a client $c$ from the model trained with \textsc{FedAvg}. 
Finally, using Theorem \ref{theo:noise_DP}, we introduce Informed Federated Unlearning (IFU) (Algorithm \ref{alg:unlearning_ours}).

\subsection{Bounding the Model Sensitivity}
\label{subsec:bounding_drift}
In what follows, we define the joint dataset for a subset of client $I_x \subset I$ as $D_{I_x} \coloneqq \cup_{i \in I_x} D_i$. Additionally, for any given client $c$, we define $I_{-c} \coloneqq I \setminus \{c\}$.

We introduce the \textit{model sensitivity} with respect to client $c$ after $n$ aggregation rounds of \textsc{FedAvg} as  
\begin{align}
\label{eq:model_sensitivity}
        \alpha(n, c)
        &\coloneqq \norm{\textsc{FedAvg}(I, n) - \textsc{FedAvg}(I_{-c}, n)}_2,
\end{align} 
where $\textsc{FedAvg}(I, n)$ it the global model obtained by applying Algorithm \ref{alg:FedAvg} for $n$ iterations over the data of clients in $I$. While the model sensitivity is an ideal measure of the impact of a given client on the federated optimization result at step $n$, the computation of this quantity for each client is not feasible in a typical FL routine. We therefore introduce a proxy for this quantity, to keep track at every FL round of each client's contribution to the aggregation  (\ref{eq:FedAvg_server_aggregation}):
\begin{equation}\label{eq:delta}
    \Delta_c(I, \vtheta) \coloneqq \norm{\omega(I,\vtheta)-\omega(I_{-c}, \vtheta)}
\end{equation}

In Theorem \ref{theo:diff_bound}, we establish a bound for the \textit{model sensitivity}, relating this quantity to the history of updates provided by the clients across FL rounds.
\begin{theorem}
     \label{theo:diff_bound}
    For smooth client's local loss functions (i.e. with Lipschitz-continuous gradients), 
    we have
    \begin{align}
	\alpha(n , c)
	&\le \Psi(n , c),
    \end{align} 
    with the bounded sensitivity $\Psi$ defined as:
      \begin{align}
    \Psi(n, c)
    = \sum\limits_{s=0}^{n-1} B(f_I, \eta) ^{\gamma_{s,n}}\cdot \Delta_c (I, \vtheta^s)
    \label{eq:def_Psi},
    \end{align}
    where $\eta$ is the learning rate, $\gamma_{s,n} = (n-s-1)K$, and 
    $B(f_I, \eta )<1$, $B(f_I, \eta )=1$ or $B(f_I, \eta )>1$ if the clients' loss functions are smooth and, respectively, strongly convex, convex, or non-convex. The exact formula for $B(f_I, \eta)$ is given in Appendix \ref{app:sec:proof_diff}, equations (\ref{eq:B1}) to (\ref{eq:B3}).
\end{theorem}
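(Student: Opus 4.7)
The plan is to control the propagation of the one-round deviation between the two trajectories $\vtheta^n = \textsc{FedAvg}(I,n)$ and $\tilde\vtheta^n = \textsc{FedAvg}(I_{-c},n)$ through a telescoping argument, with a per-round amplification factor coming from the smoothness of the local losses. First I would introduce the round operator $T_J(\vtheta) \coloneqq \vtheta + \omega(J,\vtheta) = \sum_{j \in J} (|D_j|/|D_J|) \Phi_j^K(\vtheta)$, where $\Phi_j^K$ denotes the $K$-fold composition of the local (S)GD step $\vtheta \mapsto \vtheta - \eta \nabla f_j(\vtheta)$ starting at $\vtheta$. With this notation, $\vtheta^{n+1} = T_I(\vtheta^n)$ and $\tilde\vtheta^{n+1} = T_{I_{-c}}(\tilde\vtheta^n)$, and the two trajectories coincide at $n=0$.

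The next step is the key decomposition. Adding and subtracting $T_{I_{-c}}(\vtheta^n)$ yields
\begin{equation*}
\vtheta^{n+1} - \tilde\vtheta^{n+1}
= \bigl[T_I(\vtheta^n) - T_{I_{-c}}(\vtheta^n)\bigr] + \bigl[T_{I_{-c}}(\vtheta^n) - T_{I_{-c}}(\tilde\vtheta^n)\bigr].
\end{equation*}
The first bracket equals $\omega(I,\vtheta^n) - \omega(I_{-c},\vtheta^n)$, whose norm is exactly $\Delta_c(I,\vtheta^n)$ by definition. The second bracket is controlled by the Lipschitz constant of $T_{I_{-c}}$. This reduces the theorem to estimating the Lipschitz constant of a single round.

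To bound $\mathrm{Lip}(T_J)$, I would invoke the textbook Lipschitz estimates for the gradient-descent step under $L$-smoothness of $f_j$: the map $\vtheta \mapsto \vtheta - \eta \nabla f_j(\vtheta)$ is $B$-Lipschitz with $B = \max(|1-\eta\mu|,|1-\eta L|) < 1$ in the $\mu$-strongly convex case (for suitable $\eta$), $B = 1$ in the convex case (non-expansive GD step for $\eta \le 2/L$), and $B = 1+\eta L$ in the non-convex case. Composing $K$ times gives $\mathrm{Lip}(\Phi_j^K) \le B^K$, and since a convex combination is $1$-Lipschitz in its arguments, $\mathrm{Lip}(T_J) \le B^K$ uniformly in $J$. (Stochasticity can be handled either by conditioning on the minibatch schedule, identical in both runs, or by working at the level of expected iterates; either way the deterministic per-step Lipschitz bound carries over.) This is the step I expect to be the main obstacle, because it requires being careful about which regime produces which constant and about handling the aggregation without losing a factor.

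Finally, combining the decomposition with $\mathrm{Lip}(T_{I_{-c}}) \le B^K$ gives the scalar recurrence
\begin{equation*}
\alpha(n+1,c) \le B^K \,\alpha(n,c) + \Delta_c(I,\vtheta^n), \qquad \alpha(0,c)=0,
\end{equation*}
which I would unroll by induction to obtain
\begin{equation*}
\alpha(n,c) \le \sum_{s=0}^{n-1} B^{(n-s-1)K} \,\Delta_c(I,\vtheta^s) = \Psi(n,c),
\end{equation*}
matching exactly the form of \eqref{eq:def_Psi} with $\gamma_{s,n} = (n-s-1)K$. The explicit formula for $B(f_I,\eta)$ then follows from the regime-specific Lipschitz constants collected above, with the qualitative dichotomy $B<1$, $B=1$, $B>1$ corresponding respectively to the strongly convex, convex, and non-convex cases.
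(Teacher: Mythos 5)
Your proposal is correct and follows essentially the same route as the paper's proof: the same add-and-subtract decomposition of one aggregation round (isolating a term whose norm is exactly $\Delta_c(I,\vtheta^n)$ plus a term controlled by the expansivity of the $K$-fold local GD step via the Hardt-et-al.-type constants), leading to the same recurrence $\alpha(n+1,c)\le B^K\alpha(n,c)+\Delta_c(I,\vtheta^n)$ unrolled by induction. The only cosmetic difference is your choice of contraction constant in the strongly convex case ($\max(|1-\eta\mu|,|1-\eta L|)$ versus the paper's $1-\eta\beta\mu/(\beta+\mu)$), which does not affect the qualitative trichotomy $B<1$, $B=1$, $B>1$.
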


\begin{proof}
We prove Theorem \ref{theo:diff_bound} in Appendix \ref{app:sec:proof_diff}. 
\end{proof}

\subsection{From model sensitivity to certified unlearning}
\label{subsec:unlearning_guarantees}
In this section, we introduce a randomized mechanism to provide guarantees for the unlearning of a given client $c$, where the magnitude of the perturbation process \cite{DP_book} is defined based on the sensitivity of Theorem \ref{theo:diff_bound}. In practice, we define a Gaussian noise mechanism to perturb each parameter of the global model $\vtheta^n$ such that we achieve $(\epsilon, \delta)$-unlearning of client $c$, according to Definition \ref{def:DP_adpated}. 
We give in Theorem \ref{theo:noise_DP} sufficient conditions for the noise perturbation to satisfy Definition \ref{def:DP_adpated}. 


\begin{theorem}
\label{theo:noise_DP} 
Under smoothness assumptions, applying to the global model $\vtheta^n$ a Gaussian noise $N(0,\sigma(n , c)^2 \mI_\vtheta)$, with $\mI_\vtheta$ the identity matrix and
    %
    \begin{equation}
        \sigma(n, c) 
        := \left[2 \left( \ln(1.25) - \ln(\delta) \right) \right]^{1/2} \epsilon^{-1} \Psi(n , c),
    \end{equation}
achieves $(\epsilon, \delta)$-unlearning of client $c$ according to Definition \ref{def:DP_adpated}.
\end{theorem}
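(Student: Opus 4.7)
The plan is to reduce the claim to the classical Gaussian mechanism of differential privacy, using Theorem~\ref{theo:diff_bound} to supply the $L_2$-sensitivity of the query $\mathcal{M}(D) = \vtheta^n = \textsc{FedAvg}(I,n)$. Concretely, I would identify the randomized mechanism in Definition~\ref{def:DP_adpated} as $f_m(\vtheta) = \vtheta + \vxi$ where $\vxi \sim \Gauss{\vzero}{\sigma(n,c)^2 \mI_\vtheta}$, and check that applying $f_m$ to $\mathcal{M}(D)$ and $\mathcal{M}(D_{-m})$ (with $m$ corresponding to client $c$) produces Gaussian distributions centered respectively at $\textsc{FedAvg}(I,n)$ and $\textsc{FedAvg}(I_{-c},n)$ with the same covariance.

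First, I would invoke Theorem~\ref{theo:diff_bound} under the smoothness hypothesis to obtain
\begin{equation*}
\bigl\|\textsc{FedAvg}(I,n) - \textsc{FedAvg}(I_{-c},n)\bigr\|_2 \;=\; \alpha(n,c) \;\le\; \Psi(n,c).
\end{equation*}
This exactly says that the $L_2$-sensitivity of the query $D \mapsto \textsc{FedAvg}(I,n)$, with respect to the ``neighboring'' relation obtained by removing all data of client $c$, is at most $\Psi(n,c)$. Next I would apply the standard Gaussian mechanism (Theorem~A.1 of the Dwork--Roth monograph \cite{DP_book}): for any query with $L_2$-sensitivity $\Delta$, additive Gaussian noise $\mathcal{N}(0,\sigma^2 \mI)$ with $\sigma \geq \sqrt{2\ln(1.25/\delta)}\,\Delta/\epsilon$ yields $(\epsilon,\delta)$-differential privacy. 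Plugging $\Delta = \Psi(n,c)$ and the prescribed $\sigma(n,c)$ makes the hypothesis tight.

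Finally, I would translate the conclusion back into Definition~\ref{def:DP_adpated}. The two inequalities in that definition are precisely the symmetric statement of $(\epsilon,\delta)$-DP applied to the pair $\{\mathcal{M}(D),\mathcal{M}(D_{-m})\}$, which the Gaussian mechanism guarantees simultaneously because the two output distributions differ only by a translation of norm at most $\Psi(n,c)$ and one can swap their roles. The main subtlety, and the only step that is not purely a citation, is justifying that ``removing client $c$'' plays the role of the adjacency relation: here I would emphasize that $\Psi(n,c)$ is built from the full trajectory of updates $\Delta_c(I,\vtheta^s)$, so it already accounts for the removal of all of $c$'s samples rather than a single data point, matching the client-level notion of unlearning in Definition~\ref{def:DP_adpated}. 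Modulo this bookkeeping, nothing new is needed beyond Theorem~\ref{theo:diff_bound} and the textbook Gaussian-mechanism calculation.
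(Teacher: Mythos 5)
Your proposal is correct and follows essentially the same route as the paper's own proof: invoke Theorem~\ref{theo:diff_bound} to bound the $L_2$-sensitivity of $D \mapsto \textsc{FedAvg}(I,n)$ by $\Psi(n,c)$, then apply Theorem~A.1 of \cite{DP_book} for the Gaussian mechanism with that sensitivity. Your added remark that the adjacency relation here is removal of \emph{all} of client $c$'s data (already captured by $\Psi(n,c)$ via the trajectory of $\Delta_c(I,\vtheta^s)$) is a useful clarification that the paper leaves implicit, but it does not change the argument.
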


\begin{proof}
	While a formal proof is given in Appendix \ref{app:sec:proof_DP}, this statement follows directly from Theorem \ref{theo:diff_bound} coupled with Theorem A.1 of \cite{DP_book}.
\end{proof}

We note that, according to Theorem \ref{theo:noise_DP}, $(\epsilon, \delta)$-unlearning a client from a given global model requires to prescribe a client-specific standard deviation for the noise, proportional to the bounded sensitivity. This is not an issue, as the bounded sensitivity is a scalar quantity that can be easily computed and stored from the clients' contribution. 

In what follows, the unlearning procedure will be defined with respect to the sensitivity threshold $\Psi^*$ related to the unlearning budget $(\epsilon, \delta)$ and standard deviation $\sigma$:
\begin{equation}
\Psi^*
\coloneqq \left[2 \left(\ln(1.25) - \ln(\delta)\right)\right]^{-1/2} \epsilon \sigma 
 \label{eq:psi_star}
 .
\end{equation}

\subsection{Informed Federated Unlearning (IFU)}
\label{subsec:unlearning_ours}
\begin{algorithm}[]
	\caption{Informed Federated Unlearning (IFU)}\label{alg:unlearning_ours}
	
	\textsc{Learning with \textsc{FedAvg}}
	\begin{algorithmic}		
		\STATE \textsc{FedAvg}($I, N$) initialized on initial model $\vtheta^0$.
		\FOR{$n$ from 0 to $N-1$, and $i$ from 1 to $c$}
		\STATE Compute $\Psi(n, i)$, equation (\ref{eq:def_Psi}).
		\ENDFOR		
	\end{algorithmic}
	\textsc{Unlearning}
	\begin{algorithmic}[1]
		\REQUIRE $c$, $\epsilon$, $\delta$, $\sigma$, amount of retraining steps $\tilde{N}$.
      \STATE Get $\Psi^*$ with equation (\ref{eq:psi_star}).
		\STATE Get $T = \argmax_n \left(\Psi(n, c) \le \Psi^* \right)$.
		\STATE The new global model is $\tilde{\vtheta} = \vtheta^T + N(0, \sigma^2 \mI_\vtheta)$.
		\STATE Run \textsc{FedAvg}($I_{-c}$, $\tilde{N}$) initialized on $\tilde{\vtheta}$.
	\end{algorithmic}
\end{algorithm}

Using the bounded sensitivity (\ref{eq:def_Psi}) and Theorem \ref{theo:noise_DP}, we introduce Informed Federated Unlearning (IFU) to unlearn the contribution of client $c \in I$ from a FL training procedure based on \textsc{FedAvg}. Algorithm \ref{alg:unlearning_ours} provides the implementation of IFU on top of \textsc{FedAvg}. 
We note that during the FL training, IFU requires the server to compute the bounded sensitivity metric $\Psi(n, i)$ from each client's contribution $\vtheta_i^{n+1}$ and current global model $\vtheta^n$. These quantities are tracked throughout FL iterations and are used to identify the optimal unlearning strategy after request from a client $c$. 
We note that since the IFU procedure of Algorithm \ref{alg:unlearning_ours} relies on \textsc{FedAvg}, the convergence guarantees are the same as in \cite{FedNova, OnTheConvergence}.

To unlearn client $c$, the server identifies the unlearning index $T$ associated to the history of bounded sensitivity metrics, i.e. the most recent global model index such that a perturbation of size $\sigma$ satisfies Theorem \ref{theo:noise_DP}:
\begin{equation}
	T
	\coloneqq \max \{n~:~\Psi(n,c) \leq \Psi^*\}
	\,.\label{eq:T_IFU}
\end{equation}
The new global model is obtained after perturbation $\tilde{\vtheta} \coloneqq \vtheta^T + \nu$, where $\nu \sim N(0, \sigma^2 \mI_\vtheta)$. Our unlearning criterion \ref{def:DP_adpated} is therefore satisfied for $\tilde{\vtheta}$ (Theorem \ref{theo:noise_DP}), and the server can perform $\tilde{N}$ new optimization rounds with \textsc{FedAvg} initialized with $\tilde{\vtheta}$. Thanks to the contribution of the remaining clients in $\tilde{\vtheta}$, the retraining with IFU is generally faster than retraining with a random initial model. 

Since $\Psi(n, i)$ increases with $n$, the server can stop computing the bounded sensitivity (\ref{eq:def_Psi}) for client $i$ whenever the condition $\Psi(n_i, i ) > \Psi^*$ is verified after $n_i$ optimization rounds. 
At this point, the model $\vtheta^{n_i-1}$ will be selected for the unlearning request of client $i$, as the models at subsequent iterations do not comply with the desired unlearning budget $\Psi^*$. Thus, Eq \ref{eq:T_IFU} does not need the entire history of the global models to be performed. Only one version of the global model must be kept for each client potentially wishing to be unlearned in latter stages.
We also note that computing the bounded sensitivity (\ref{eq:def_Psi}) only requires to compute the norm of sums of vectors already computed, which can be done while the clients perform their local updates. Hence, there is no added time required to compute the bounded sensitivity.

If the clients wish to employ gradient masking techniques to avoid revealing their full updates, each client can compute $\Delta_c (I, \vtheta)$ for themselves. Indeed, Eq. \ref{eq:delta} can be re-written as:
\begin{equation}
    \Delta_c(I, \vtheta^n) = \frac{|D_c|}{|D_I| - |D_c|} \norm{\vtheta^{n+1}-\vtheta^{n+1}_c}
\end{equation}


\section{Sequential FU with SIFU}
\label{sec:SIFU}

\begin{algorithm}[t]
\caption{Sequential IFU (SIFU)}\label{alg:SIFU}

\textsc{Learning with \textsc{FedAvg}}
\begin{algorithmic}[1]
    \STATE \textsc{FedAvg}($I, N$) initialized on initial model $\vtheta_0^0$. 
    \STATE Compute $\Psi_0(n, i)$, equation (\ref{eq:def_Psi_extended}).
\end{algorithmic}
	
\textsc{Unlearning the series of requests} \mbox{$\{W_u\}$}
\begin{algorithmic}[1]
    \REQUIRE $\{W_u\}_{u=1}^R$, $\epsilon$, $\delta$, $\sigma$, and $\{N_u\}_{u=1}^U$
    \STATE Get $\Psi^*$ with equation (\ref{eq:psi_star}).
    \FOR{$u$ from 1 to $U$}
        \STATE $I_u = I_{u-1} \setminus W_u$.
        \STATE Compute ($\zeta_u$, $T_u$) with $H(u)$, eq. (\ref{eq:theta_T_zeta}) and (\ref{n_u+1}).\STATE The new global model is $\vtheta_u^0 = \vtheta_{\zeta_u}^{T_u} + N(0, \sigma^2 \mI_\vtheta)$.\STATE Perform \textsc{FedAvg}($I_u, N_u$) initialized on $\vtheta_u^0$.
        \STATE Update $H(u+1)$ with $\zeta_u$, $T_u$, and $H(u)$, eq. (\ref{eq:H_recc}).\STATE Compute $\Psi_u(n, I_u)$, eq. (\ref{eq:def_Psi_extended_set}).
    \ENDFOR
\end{algorithmic}

\end{algorithm}

	\def\spacesquare{1.15} 
	\def\h{ 0.9}
	\def\squareborder{0.5}

	\begin{figure*}[ht]
		\begin{center}
			
		\begin{tikzpicture}[
			bluenode/.style={draw, fill=blue, line width=\squareborder},
			greennode/.style={draw, fill=green, line width=\squareborder},
			rednode/.style={draw, fill=red, line width=\squareborder},
			blacknode/.style={draw, fill=black, line width=\squareborder},
			orangenode/.style={draw, fill=orange, line width=\squareborder},
			arrowSGD/.style={line width=0.5pt, ->, 
				>=triangle 45},
			arrowPerturb/.style={line width=0.5pt, ->, 
				>= open triangle 45},
		]

        \matrix [draw] at (-1.5, 3.02) {
            \node [bluenode] {}; & 
            \node{Training}; 
            \\
            \node [rednode] {}; & 
            \node{Unlearning $W_1$}; 
            \\
            \node [greennode] {}; & 
            \node{Unlearning $W_2$}; 
            \\
            \node [orangenode] {}; & 
            \node{Unlearning $W_3$}; 
            \\

            \draw [arrowSGD] (0, 0) -- (0.6, 0); & \node{Server aggregation};\\
            \draw [arrowPerturb] (0, 0) -- (0.6, 0); & \node{Noise perturbation};\\
		};

        \matrix [draw] at (5.1, 4.5) {
            \node{ $H(0) = (\vtheta_0^0, \ldots,  \vtheta_0^{N_0})$}; \\
            \node{ $H(1) = (\vtheta_0^0, \ldots, \vtheta_0^{T_1}, \vtheta_1^0, \ldots, \vtheta_1^{N_1})$}; & \node{$\zeta_1=0$}; \\
            \node{$H(2) =(\vtheta_0^0, \ldots, \vtheta_0^{T_1}, \vtheta_1^0, \ldots, \vtheta_1^{T_2}, \vtheta_2^{0}, \ldots \vtheta_2^{N_2})$}; & \node{ $\zeta_2=1$}; \\
            \node{ $H(3) =(\vtheta_0^0, \ldots, \vtheta_0^{T_1}, \vtheta_1^0, \ldots, \vtheta_1^{T_3}, \vtheta_3^{0}, \ldots \vtheta_3^{N_3})$}; & \node{ $\zeta_3=1$}; \\
		};

%

		\node[bluenode, label=left:{$\vtheta_0^0$}] (A1) at (0, 0) {};
		\node[bluenode, label=below:{$\vtheta_0^1$}] (A2) at (\spacesquare, 0.3) {};
		\node[bluenode] (A3) at (2 *\spacesquare, 0.5) {};
		\node[bluenode] (A4) at (3*\spacesquare, 0.6) {};
		\node[bluenode, label=below:{$\vtheta_0^{T_1}$}] (A5) at (4*\spacesquare, 0.6) {};
		\node[bluenode] (A6) at (5*\spacesquare -.1, 0.5) {};
		\node[bluenode] (A7) at (6*\spacesquare - 0.2, 0.35) {};
		\node[bluenode] (A8) at (7*\spacesquare -0.3, 0.2) {};
		\node[bluenode, label=below:{}] (A9) at (8*\spacesquare - 0.5, 0.1) {};
		\node[bluenode, label=below:{}] (A10) at (9*\spacesquare - 0.7, 0) {};
		\node[bluenode, label=right:{$\vtheta_0^{N_0}$}] (A11) at (10*\spacesquare - 1.2, -0.05) {};
		
		\node[rednode, label=left:{$\vtheta_1^0$}] (B1) at (4*\spacesquare + 0.1, 2 *\h) {};
		\node[rednode] (B2) at (5*\spacesquare -0.1, 2 * \h + 0.1) {};
		\node[rednode, label=above:{$\vtheta_1^{T_3}$}] (B3) at (6*\spacesquare -0.3, 2 * \h +0.15) {};
		\node[rednode, label=below:{$\vtheta_1^{T_2}$}] (B4) at (7*\spacesquare -0.5, 2 * \h +0.1) {};
		\node[rednode] (B5) at (8*\spacesquare -0.8, 2 * \h -0.1) {};
		\node[rednode] (B6) at (9*\spacesquare -1.1, 2 * \h - 0.5) {};
		\node[rednode, label=right:{$\vtheta_1^{N_1}$}] (B7) at (10*\spacesquare - 1.5, 2 * \h - 1.) {};
		
		\node[greennode, label=above:{$\vtheta_2^0$}] (C1) at ( 7 * \spacesquare, 2.5 * \h) {};
		\node[greennode] (C2) at ( 8 * \spacesquare ,  2.5 * \h + 0.05) {};
		\node[greennode] (C3) at ( 9 * \spacesquare - 0.4,  2.5 * \h +0.1) {};
		\node[greennode, label=right:{$\vtheta_2^{N_2}$}] (C4) at ( 9 * \spacesquare + 0.4,  2.5 * \h + 0.3) {};
		
		\node[orangenode, label=left:{$\vtheta_3^0$}] (D1) at (5 * \spacesquare + 0.5, \h + 0.2) {};
		\node[orangenode] (D2) at (6 * \spacesquare + 0.2, \h -0.1) {};
		\node[orangenode] (D3) at (7 * \spacesquare -0.2, \h -0.2) {};
		\node[orangenode, label=right:{$\vtheta_3^{N_3}$}] (D4) at (7 * \spacesquare + 0.6, \h -0.1) {};

		\draw [arrowSGD] (A1)-- (A2);
		\draw [arrowSGD] (A2)-- (A3);
		\draw [arrowSGD] (A3)-- (A4);
		\draw [arrowSGD] (A4)-- (A5);
		\draw [arrowSGD] (A5)-- (A6);
		\draw [arrowSGD] (A6)-- (A7);
		\draw [arrowSGD] (A7)-- (A8);
		\draw [arrowSGD] (A8)-- (A9);
		\draw [arrowSGD] (A9)-- (A10);
		\draw [arrowSGD] (A10)-- (A11);

		\draw [arrowSGD] (B1)-- (B2);
		\draw [arrowSGD] (B2)-- (B3);
		\draw [arrowSGD] (B3)-- (B4);
		\draw [arrowSGD] (B4)-- (B5);
		\draw [arrowSGD] (B5)-- (B6);
		\draw [arrowSGD] (B6)-- (B7);
		
		\draw [arrowSGD] (C1)-- (C2);
		\draw [arrowSGD] (C2)-- (C3);
		\draw [arrowSGD] (C3)-- (C4);
		
		\draw [arrowSGD] (D1)-- (D2);
		\draw [arrowSGD] (D2)-- (D3);
		\draw [arrowSGD] (D3)-- (D4);
		
		\draw [arrowPerturb] (A5)-- (B1);
		\draw [arrowPerturb] (B4)-- (C1);
		\draw [arrowPerturb] (B3)-- (D1);

		\end{tikzpicture}
		
		\end{center}
		
\caption{
Illustration of SIFU (Algorithm \ref{alg:SIFU}) when the server receives $U=3$ unlearning requests, through the evolution of the global model parameters $\vtheta_u^n$ after server aggregation and noise perturbation. 
After standard federated training via \textsc{FedAvg}$(I, N_0)$ the training history is $H(0) = (\vtheta_0^0, \ldots,  \vtheta_0^{N_0})$. At request $u=1$, the unlearning index is $T_1$, and the training history becomes $H(1) = (\vtheta_0^0, \ldots, \vtheta_0^{T_1}, \vtheta_1^0, \ldots, \vtheta_1^{N_1})$ with $\zeta_1 = 0$. At request $u=2$, the unlearning index is $T_2$ and the training history becomes $H(2) =(\vtheta_0^0, \ldots, \vtheta_0^{T_1}, \vtheta_1^0, \ldots, \vtheta_1^{T_2}, \vtheta_2^{0}, \ldots \vtheta_2^{N_2})$ with $\zeta_2 = 1$. Finally, at request $u=3$, the unlearning index is found at $T_3<T_2$ in the branch of request $u=1$. The updated training history is now $H(3) =(\vtheta_0^0, \ldots, \vtheta_0^{T_1}, \vtheta_1^0, \ldots, \vtheta_1^{T_3}, \vtheta_3^{0}, \ldots \vtheta_3^{N_3})$ with $\zeta_3 = 1$.
}
\label{fig:example_with_R3}
\end{figure*}
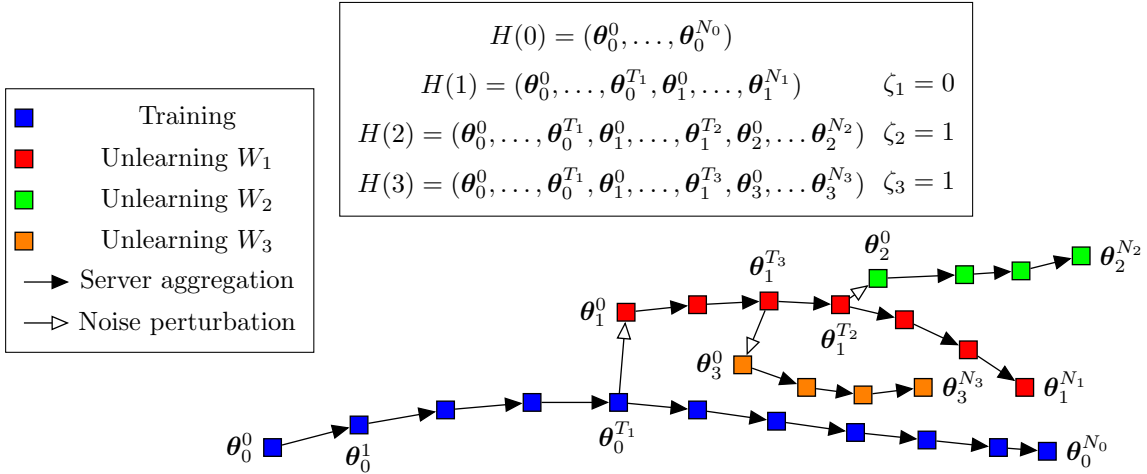

In this section, we extend IFU to the sequential unlearning setting with Sequential IFU (SIFU). With Algorithm \ref{alg:SIFU}, SIFU is designed to satisfy a series of $U$ unlearning requests expressed by set of indices $W_u$, corresponding to clients to unlearn at request $1\leq u\leq U$. 
SIFU generalizes IFU for which $U=1$ and $W_1 = \{c\}$. We provide an illustration of SIFU in Figure \ref{fig:example_with_R3}.

The notations introduced thus far need to be generalized to account for series of unlearning requests $W_1, W_2, \ldots, W_U$.  Global models are now referenced by their coordinates $(u, n)$, i.e. $\vtheta_u^n$ represents the model after $u$ unlearning requests followed by a retraining made of $n$ aggregation steps. Hence, $\vtheta_u^0$ is the initialization of the model when starting to unlearn the clients in $W_u$. 
Additionally, we define $N_u$ as the number of server aggregations on the remaining clients required to reach the desired performance threshold (i.e. perform successful retraining).
Therefore, by construction,  $\vtheta_u^{N_u}$ is the model obtained after using SIFU to process the sequence of unlearning requests $\{W_s\}_{s=1}^u$. Finally, we define $I_u$ as the set of remaining clients after unlearning request $u$, i.e. $I_u \coloneqq I\setminus \cup_{s=1}^u W_s = I_{u-1} \setminus W_u$, with $I_0 = I$.

In case of multiple unlearning requests, the bounded sensitivity (\ref{eq:def_Psi}) for client $i$ must be updated at each unlearning index $u$ to account for the new history of global models resulting from retraining. 
With SIFU, the selection of the unlearning index $T$ for a request $u$ depends of the past history of unlearning requests. To  track of the evolution of the unlearning procedure, we introduce the \emph{model history} $H(u)$, which keeps track of each iteration of the global model across requests. Please note that this object is here introduced solely for illustration purposes, and is not actually stored when running SIFU. With reference to Figure \ref{fig:example_with_R3}, we start with the original sequence of global models obtained at each FL round, i.e. $H(0) = (\vtheta_0^0, \ldots, \vtheta_0^{N_0})$. Similarly to IFU, the first unlearning request requires to identify the unlearning index $T_1$ for which the corresponding global model $\vtheta_0^{T_1}$ must be perturbed to obtain $\vtheta_1^0$ and retrained until convergence, i.e. up to $\vtheta_1^{N_1}$. In this case, the bounded sensitivity is computed according to equation (\ref{eq:def_Psi}).

After unlearning (i.e. Gaussian perturbation followed by re-training), the current training history is now $H(1)= (\vtheta_0^0, \ldots, \vtheta_0^{T_1}, \vtheta_1^0, \ldots, \vtheta_1^{N_1})$. 
More generally, we define $H(u)$ as the training history after $u$ unlearning requests and $N_u$ FL iterations steps from $\vtheta^u_0$. We define the increment history of a given client $c$ as the sequence obtained by computing $\Delta_c(I_k, \vtheta_k^s)$ on every element of $H$, according to their order of appearance in the training history:
\begin{equation}
    \Lambda_c^u = (\Delta_c(I_k, \vtheta_k^s) \text{  for  } \vtheta_k^s \in H(u))
    ,
\end{equation}
The bounded sensitivity for client $i$ should be updated to account for this new history of global models. We therefore generalize equation (\ref{eq:def_Psi}) to account for the entire training history:
\begin{equation}
\Psi_u(n, c)
\coloneqq \sum_{s=1}^{n} B(f_{I}, \eta)^{\gamma_{s, n}} \cdot \Lambda_{c}^u[s]
,
\label{eq:def_Psi_extended}
\end{equation}
where $\Lambda_{c}^u[s]$ is the $s$-th element of the sequence $\Lambda_{c}^u$.
We extend this quantity to a set of clients $S$ as
\begin{equation}
	\Psi_u(n, S)  
	\coloneqq \max_{c \in S}\Psi_u(n, c)
	\label{eq:def_Psi_extended_set}
	.
\end{equation}

 For a given unlearning request $u+1$, we evaluate $\Psi_u(n, W_{u+1})$ along the clients' history and we identify the optimal parameters to initialize unlearning from:
 \begin{equation} \label{eq:theta_T_zeta}
     \vtheta_{\zeta_{u+1}}^{T_{u+1}} = H(u)[n_{u+1}],
 \end{equation} where
\begin{equation}\label{n_u+1}
n_{u+1} = \max \{n~:~\Psi_u(n,W_{u+1}) \leq \Psi^*\}\,.
\end{equation}

We proceed by perturbing the parameters $\vtheta_{\zeta_{u+1}}^{T_{u+1}}$ with Gaussian noise defined in Theorem \ref{theo:noise_DP} to obtain $\vtheta_{u+1}^0$. A new FL routine is then operated with the remaining clients to obtain parameters $\vtheta_{u+1}^{N_{u+1}}$, and to update the training history as 
\begin{equation}\label{eq:H_recc}
H(u+1)= (\vtheta_0^0, \ldots, \vtheta_{\zeta_{u+1}}^{T_{u+1}}, \vtheta_{u+1}^0, \ldots, \vtheta_{u+1}^{N_{u+1}}).
\end{equation}

As for IFU, performing SIFU requires the server to store the sensitivity associated to each client at each round, and one global model checkpoint per client. The computational cost for this operation is negligible. 
Theorem \ref{theo:zeta} shows that with SIFU we achieve unlearning guarantees for every client in a sequential unlearning request $F_u$. 
\begin{theorem}
\label{theo:zeta}
The model $\vtheta_u^{N_u}$ obtained with SIFU satisfies $(\epsilon, \delta)$-unlearning for every client in current and previous unlearning requests, i.e. in $F_u = \cup_{s=1}^u W_s$.
\end{theorem}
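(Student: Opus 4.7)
The plan is to prove Theorem \ref{theo:zeta} by induction on the number of requests $u$, using Theorem \ref{theo:noise_DP} at each fresh noise injection together with the post-processing invariance of $(\epsilon,\delta)$-unlearning, which is inherited from its connection to Gaussian differential privacy as recalled in Section \ref{subsec:unlearning_baselines}.

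For the base case $u = 1$, I would apply Theorem \ref{theo:noise_DP} to each $c \in W_1 = F_1$ separately: the selection rule (\ref{n_u+1}) combined with the set-extension (\ref{eq:def_Psi_extended_set}) gives $\Psi_0(T_1, c) \leq \Psi_0(T_1, W_1) \leq \Psi^*$, so $\vtheta_1^0 = \vtheta_0^{T_1} + \nu$ achieves $(\epsilon,\delta)$-unlearning of $c$; the $N_1$ subsequent \textsc{FedAvg} rounds operate on $I_1 = I \setminus W_1 \not\ni c$ and hence act as post-processing that preserves the guarantee at $\vtheta_1^{N_1}$.

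For the inductive step, assume the claim for $u - 1$ and take $c \in F_u$. If $c \in W_u$, the base-case argument generalizes along the concatenated history $H(u-1)$: a branch-by-branch extension of Theorem \ref{theo:diff_bound}, absorbing inter-branch jumps $\vtheta_{u'}^0 - \vtheta_{\zeta_{u'}}^{T_{u'}}$ into the increment sequence $\Lambda_c^{u-1}$, yields that the true model sensitivity w.r.t.\ $c$ at $\vtheta_{\zeta_u}^{T_u}$ is at most $\Psi_{u-1}(T_u, c) \leq \Psi^*$, and Theorem \ref{theo:noise_DP} applied at $\vtheta_u^0$ followed by post-processing on $I_u \not\ni c$ closes the case. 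If instead $c \in F_{u-1}$, let $s \leq u - 1$ be such that $c \in W_s$. The key structural observation is that $\Delta_c(I_k, \vtheta_k^{s'}) = 0$ as soon as $c \notin I_k$, i.e.\ for every branch $k \geq s$; hence $\Lambda_c^{u-1}$ is zero at every position of $H(u-1)$ beyond the position $n_s$ of $\vtheta_{\zeta_s}^{T_s}$. I would then split on the rewind branch: if $\zeta_u \geq s$, then $\vtheta_u^{N_u}$ is a randomized post-processing of $\vtheta_s^0$ via operations independent of $c$'s data, so the certification of $\vtheta_s^0$ (granted by the same argument applied at request $s$) propagates; if $\zeta_u < s$, the vanishing of $\Lambda_c^{u-1}$ past $n_s$ reduces $\Psi_{u-1}(T_u, c)$ to a sub-expression of $\Psi_{s-1}(\cdot, c)$ already certified $\leq \Psi^*$, so the fresh noise of request $u$ re-certifies unlearning of $c$ via Theorem \ref{theo:noise_DP}, and the subsequent retraining is again post-processing.

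The crux of the argument is the last sub-case: one must verify that $\Psi_{u-1}(T_u, c) \leq \Psi^*$ even though the selection rule (\ref{n_u+1}) only imposes the bound for the current request $W_u$. In the non-convex regime ($B(f_I,\eta) > 1$), this follows from the one-step recursion $\Psi(n{+}1,c) = B(f_I,\eta)^{K}\,\Psi(n,c) + \Delta_c(n)$, which with $B(f_I,\eta)^{K} \geq 1$ and $\Delta_c(n) \geq 0$ makes $\Psi(\cdot, c)$ non-decreasing in $n$, so $T_u \leq n_s$ yields $\Psi_{u-1}(T_u, c) \leq \Psi_{s-1}(n_s, c) \leq \Psi^*$. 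The (strongly) convex regime requires a finer bookkeeping of the geometric coefficients in (\ref{eq:def_Psi_extended}), again enabled by the vanishing of $\Lambda_c^{u-1}$ past $n_s$ combined with the contraction $B(f_I,\eta) \leq 1$.
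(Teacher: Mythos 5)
Your induction on $u$, with the case split on whether the new rewind point precedes or follows the earlier ones, the application of Theorem \ref{theo:noise_DP} at each fresh perturbation, and the use of monotonicity of $\Psi$ to certify previously-removed clients, is essentially the paper's own argument in Appendix \ref{app:sec:SIFU_convergence}. The one thing you make explicit that the paper glosses over is the crux you identify: the selection rule (\ref{n_u+1}) only constrains clients in the \emph{current} request, so certifying a client $c \in W_s$ for $s < u$ requires $\Psi(\cdot, c)$ to be non-decreasing in $n$ --- the paper silently invokes this via $\Psi_{u^*}(n_{u+1}, c^*) \le \Psi_{u^*}(n_{u^*}, c^*)$, which is immediate only when $B(f_I,\eta) \ge 1$, and your caveat that the strongly convex case ($B < 1$) needs separate bookkeeping is a legitimate observation about a step both proofs leave unjustified in that regime.
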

\begin{proof}
    See Appendix \ref{app:sec:SIFU_convergence}.
\end{proof}

\section{Experiments}
\label{sec:experiments}

\begin{figure*}[ht]
\begin{centering}
    \includegraphics[width=\linewidth]{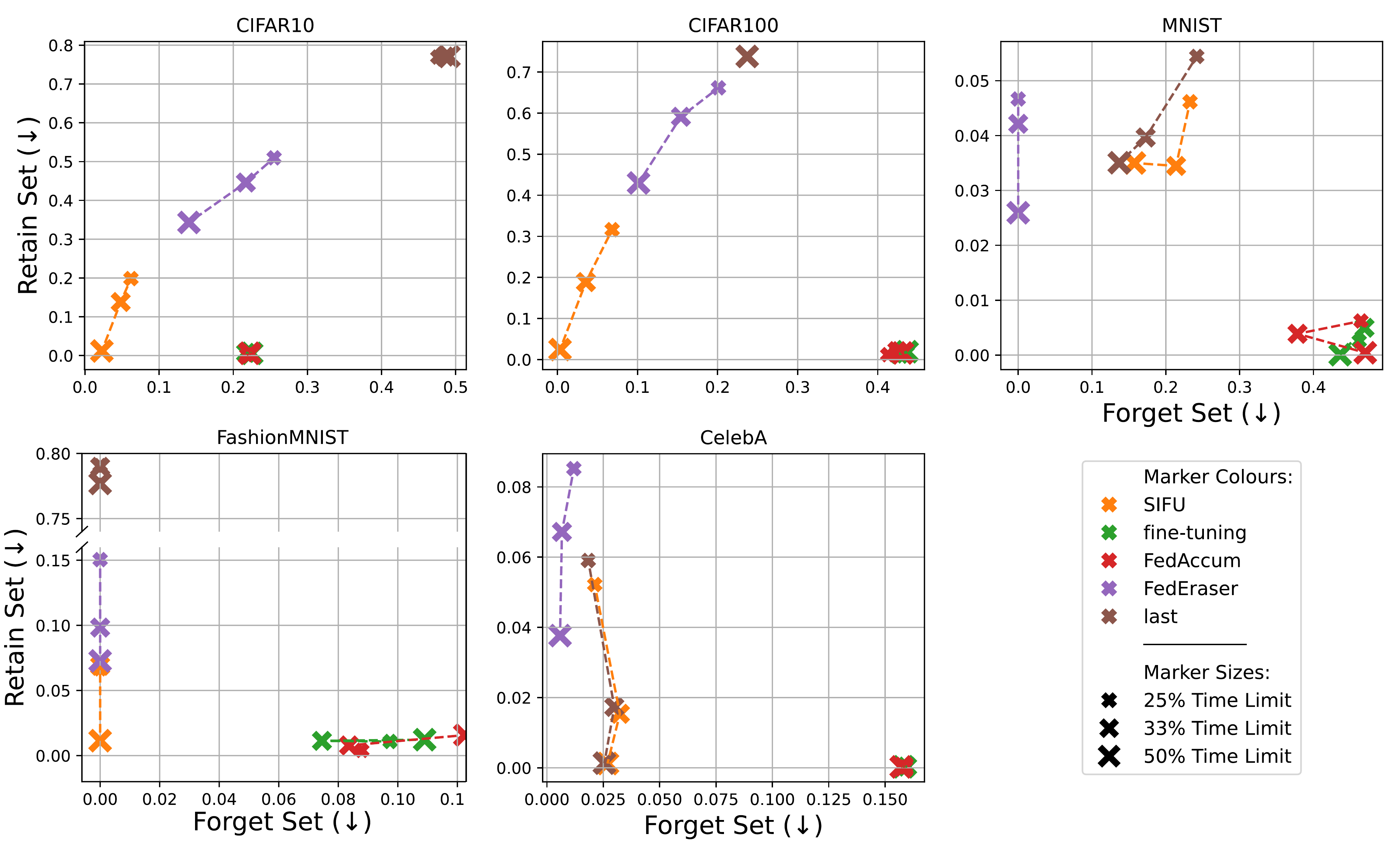}
\end{centering}
\caption{ Difference in accuracy (absolute value) between \textsc{Scratch} and the considered unlearning methods, on both retain and forget sets (lower is better).
}
	\label{fig:NEW_SIFU}
\end{figure*}

In this section, we experimentally demonstrate the effectiveness of SIFU through a series of experiments introduced in Section \ref{subsec:experimental_setup}. In Section \ref{subsec:experimental_results}, we illustrate and discuss our experimental results.
Results and related code are publicly available at \href{https://github.com/Accenture/Labs-Federated-Learning/tree/SIFU}{this GitHub URL}.

\subsection{Experimental Setup}
\label{subsec:experimental_setup}

\textbf{Datasets.} We report experiments on adapted versions of CIFAR-10 \cite{CIFAR-10}, CIFAR-100 \cite{CIFAR-10}, MNIST \cite{MNIST}, FashionMNIST \cite{FashionMNIST}, and CelebA \cite{CelebA}. For each dataset, we consider $M=100$ clients, with 100 data points each. For MNIST and FashionMNIST, each client has data samples from only one class, so that each class is represented in 10 clients only. For CIFAR10 and CIFAR100, each client has data samples with ratio sampled from a Dirichlet distribution with parameter 0.1 \cite{FL_and_CIFAR_dir}.
Finally, in CelebA, clients own data samples representing the same celebrity as done in LEAF \cite{Leaf}. With these five datasets, we consider different levels of heterogeneity based on labels and features distribution.

\textbf{Models.}
For MNIST, we train a logistic regression model to consider a convex classification problem. For the other four datasets, we train a neural network with convolutional layers followed by fully connected ones. Further details are provided in Appendix \ref{app:sec:experiments}.

\textbf{Unlearning schemes.}
We compare a variety of state-of-the-art FU schemes. First, we consider our method SIFU as described in Algorithm \ref{alg:SIFU} and set $B=1$. The choice for $B$ is experimentally justified in Appendix \ref{app:sec:B_justif}.
In addition to SIFU, we consider the following unlearning schemes from the state-of-the-art: \textsc{Scratch}, where retraining of a new model is performed from scratch on the remaining clients; \textsc{Fine-Tuning}, where retraining is performed on the current global model with the remaining clients; \textsc{DP} \cite{DP_book}, where training with every client is performed with Differential Privacy, and both \textsc{FedEraser} and \textsc{FedAccum} \cite{FedEraser}, where unlearning is performed by using the gradient history of clients to remove their contribution.
Finally, we consider a freely adapted version of \citet{DescentToDelete}'s perturbed gradient descent by noising the final model with a standard deviation calculated from SIFU's theoretical analysis, which boils down to performing SIFU without the "roll-back" step.

\begin{figure*}[ht]
\begin{centering}
    \includegraphics[width=\linewidth]{./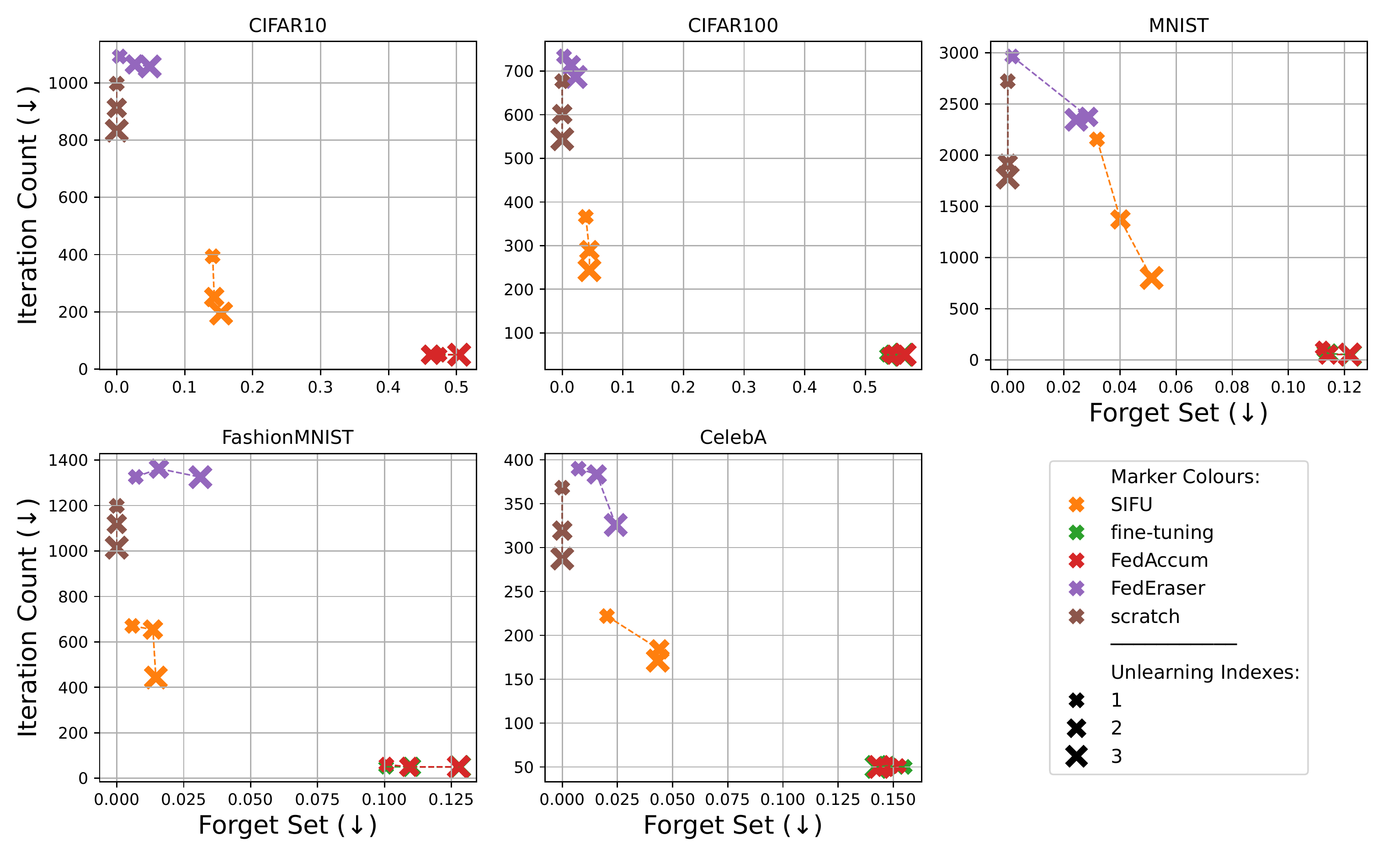}
\end{centering}
\caption{
Total amount of aggregation rounds (1\textsuperscript{st} row) and model accuracy of unlearned clients (2\textsuperscript{nd} row) for the unlearning of watermarked data from MNIST, FashionMNIST, CIFAR10, CIFAR100, and CelebA (the lower the better).
}
\label{fig:SIFU_backdoored}
\end{figure*}

\textbf{Experimental scenario.}

Since unlearning is about efficiency of information deletion, we first study the unlearning capabilities of every method under different time constraints.
We limit the unlearning time allowed for each method to respectively $25\%$, $33\%$ and $50\%$ of the training time, and display the accuracy results both on the unlearned clients (forget set) and the remaining ones (retain set).
Secondly, to account for the sequential unlearning setting and the adversarial case of watermarked data, we study the scenario proposed by \cite{TowardsProbabilisticVerification} in Section \ref{subsec:exp_watermark}.
Each unlearning method is applied with the same hyperparameters, i.e. local learning rate $\eta$, amount of SGD steps $K$ and optimizer (Appendix \ref{app:sec:experiments}).


\textbf{Unlearning quantification.}
We verify the success of a FU scheme by evaluating its running time and difference in accuracy with \textsc{Scratch} on the retain and forget set.
When studying methods under a time constraint (Section \ref{subsec:experimental_results}), we compute: (a) The accuracy difference on the forget set, thus evaluating unlearning quality, and (b) the accuracy difference on the retain set, thus evaluating retained utility. 
When studying methods under a utility constraint (Section \ref{subsec:exp_watermark}), we compute (a) The accuracy difference on the forget set, and (b) the number of required iterations to reach the fixed utility threshold.
The differences are computed with respect to the performance of \textsc{Scratch} to identify the method associated with similar unlearning properties and reduced computation time.


\subsection{Experimental Results}
\label{subsec:experimental_results}

%

The results for the first experimental setting described in Sec.~\ref{subsec:experimental_setup} are available in Fig \ref{fig:NEW_SIFU}.
SIFU outperforms its competitors on 4 out of 5 datasets. In particular, on every experiment involving FU of a neural network, SIFU is the only method to achieve a good trade-off between forgetting quality and accuracy on the retain set, while other methods fail either by lack of forgetting quality (Fine-Tuning, FedAccum) or by low retain set accuracy (FedEraser, \textsc{Last}). 

It is interesting to notice that while \textsc{Last} performs quite well on very simple datasets such as CelebA and MNIST, the introduced noise becomes too large in more complex datasets such as CIFAR10(0), rendering it infeasible for the model to converge after noising. This motivates our method and underlines the importance of the "roll-back" step in SIFU.
While FedEraser tends to be a good performer in terms of retain set accuracy, the imposed unlearning time limit forbids it from recovering satisfying utility on the retain set, as compared to SIFU. Thus, one can wonder whether the method would perform better if provided with more time. We answer this question in Section \ref{subsec:exp_watermark}.
Finally, we excluded some methods when their poor performances would hinder the figure's readability: \textsc{DP} was removed from Figures \ref{fig:NEW_SIFU} and \ref{fig:SIFU_backdoored}, and \textsc{Last} from Figure \ref{fig:SIFU_backdoored}. See Appendix \ref{app:sec:experiments} for DP performances.

\subsection{Verifying Unlearning with Watermarking}
\label{subsec:exp_watermark}
The work of \cite{TowardsProbabilisticVerification} proposes an adversarial approach to verify the unlearning efficiency through watermarking. 
We follow this approach by applying watermarking to each client's data by randomly assigning the maximum possible value to 10 given pixels of each data sample. 
To ensure that clients' heterogeneity is only due to the modification of the pixels intensities, we define data partitioning across clients by randomly assigning the data according to a Dirichlet distribution with parameter $\alpha = 1$.

We consider a sequential unlearning scenario in which the server performs FL training and then receives $U=3$ sequential unlearning requests to unlearn 10 random clients per request. In the special case of MNIST and FashionMNIST, the server must unlearn 10 clients owning the same class.
The server performs unlearning with each method, before fine-tuning the obtained model on the remaining data until the global model accuracy on the remaining clients exceeds a fixed threshold specific to each dataset, namely:
 93\% for MNIST, 99.9\% for CelebA, and 90\% for FashionMNIST, CIFAR10 and CIFAR100. We impose a minimum of 50 FL aggregation rounds, and a maximum of 10000 rounds when the stopping accuracy threshold is not reached.

Figure \ref{fig:SIFU_backdoored} shows our results for this experimental scenario. On every dataset where a CNN is used, SIFU outperforms every other method, confirming the conclusions drawn from Figure \ref{fig:NEW_SIFU}. Indeed, it offers a unique trade-off between efficiency and unlearning, while FedEraser provides satisfying unlearning but is even more costly than \textsc{Scratch} in terms of iteration count, thus rendering the method of limited interest in our experimental scenario.

\subsection{Verifying the consistency of SIFU.}
\label{subsec:exp_nosie_std}

We provide additional experiments in Appendix \ref{app:sec:experiments} assessing the unlearning results depending on varying training conditions, including the choice of the clients to be unlearnt, and the amplitude of the model perturbation. Our findings are consistent in showing that SIFU is the best performing method in terms of computational efficiency and unlearning capabilities. In particular, we note that when unlearning with low (resp. high) values of $\sigma$, SIFU has identical behavior to \textsc{Scratch} (resp. \textsc{Last}), as the unlearning is applied to the initial model $\vtheta_0^0$ (resp. final $\vtheta_u^{N_u}$). Moreover, independently from the chosen batch of clients to be unlearnt, Supplementary Figure \ref{fig:forgetting_many_class} shows that SIFU consistently leads to effective unlearning with lower computational cost as compared to \textsc{Scratch}.

\section{Conclusions}
In this work, we introduce SIFU, a general FU scheme allowing unlearning of clients contributions from a model trained with \textsc{FedAvg}. Upon receiving an unlearning request from a given client, SIFU identifies the optimal FL iteration from which to re-initialise the optimisation. We prove that SIFU accounts for sequences of requests while satisfying the unlearning guarantees. SIFU is scalable with respect to model size and FL iterations, and generalizes beyond the convex assumption on the local
loss functions, thus relaxing the strong assumptions typically adopted in the MU literature.

A further contribution of this work consists in a new theory for bounding the clients contribution in FL, which can be computed by the server without major overhead, and no additional communication nor computation on the client side. 




\section{Acknowledgements}
This work was supported by the French government managed by the Agence Nationale de la Recherche (ANR) through France 2030 program with the reference ANR-23-PEIA-005 (REDEEM project), and through the Franco-German research program with reference ANR-22-FAI1-0003 (TRAIN project). It was also funded in part by the Groupe La Poste, sponsor of the Inria Foundation, in the framework of the FedMalin Inria Challenge.


\bibliography{main}

\begin{thebibliography}{}

\bibitem[Bourtoule et~al., 2021]{SISA}
Bourtoule, L., Chandrasekaran, V., Choquette-Choo, C.~A., Jia, H., Travers, A.,
  Zhang, B., Lie, D., and Papernot, N. (2021).
\newblock Machine unlearning.
\newblock In {\em 2021 IEEE Symposium on Security and Privacy (SP)}, pages
  141--159. IEEE.

\bibitem[Caldas et~al., 2018]{Leaf}
Caldas, S., Duddu, S. M.~K., Wu, P., Li, T., Kone{\v{c}}n{\'{y}}, J., McMahan,
  H.~B., Smith, V., and Talwalkar, A. (2018).
\newblock {LEAF: A Benchmark for Federated Settings}.
\newblock {\em NeurIPS}.

\bibitem[Cao and Yang, 2015]{Cao2015TowardsMS}
Cao, Y. and Yang, J. (2015).
\newblock Towards making systems forget with machine unlearning.
\newblock {\em 2015 IEEE Symposium on Security and Privacy}, pages 463--480.

\bibitem[Chen et~al., 2020]{UnderstandingGradientClipping}
Chen, X., Wu, S.~Z., and Hong, M. (2020).
\newblock Understanding gradient clipping in private sgd: A geometric
  perspective.
\newblock In Larochelle, H., Ranzato, M., Hadsell, R., Balcan, M., and Lin, H.,
  editors, {\em Advances in Neural Information Processing Systems}, volume~33,
  pages 13773--13782. Curran Associates, Inc.

\bibitem[Dwork and Roth, 2014]{DP_book}
Dwork, C. and Roth, A. (2014).
\newblock The algorithmic foundations of differential privacy.
\newblock {\em Found. Trends Theor. Comput. Sci.}, 9(3–4):211–407.

\bibitem[Geng et~al., 2023]{geng2023improved}
Geng, J., Mou, Y., Li, Q., Li, F., Beyan, O., Decker, S., and Rong, C. (2023).
\newblock Improved gradient inversion attacks and defenses in federated
  learning.
\newblock {\em IEEE Transactions on Big Data}.

\bibitem[Ginart et~al., 2019]{MakingAIForgetYou}
Ginart, A., Guan, M., Valiant, G., and Zou, J.~Y. (2019).
\newblock Making ai forget you: Data deletion in machine learning.
\newblock In Wallach, H., Larochelle, H., Beygelzimer, A., d\textquotesingle
  Alch\'{e}-Buc, F., Fox, E., and Garnett, R., editors, {\em Advances in Neural
  Information Processing Systems}, volume~32. Curran Associates, Inc.

\bibitem[Golatkar et~al., 2021]{Golatkar_2021_CVPR}
Golatkar, A., Achille, A., Ravichandran, A., Polito, M., and Soatto, S. (2021).
\newblock Mixed-privacy forgetting in deep networks.
\newblock In {\em Proceedings of the IEEE/CVF Conference on Computer Vision and
  Pattern Recognition (CVPR)}, pages 792--801.

\bibitem[Golatkar et~al., 2020a]{Golatkar_2020_CVPR}
Golatkar, A., Achille, A., and Soatto, S. (2020a).
\newblock Eternal sunshine of the spotless net: Selective forgetting in deep
  networks.
\newblock In {\em Proceedings of the IEEE/CVF Conference on Computer Vision and
  Pattern Recognition (CVPR)}.

\bibitem[Golatkar et~al., 2020b]{golatkar2020forgetting}
Golatkar, A., Achille, A., and Soatto, S. (2020b).
\newblock Forgetting outside the box: Scrubbing deep networks of information
  accessible from input-output observations.

\bibitem[Gong et~al., 2022]{gong2022forget}
Gong, J., Kang, J., Simeone, O., and Kassab, R. (2022).
\newblock Forget-svgd: Particle-based bayesian federated unlearning.
\newblock In {\em 2022 IEEE Data Science and Learning Workshop (DSLW)}, pages
  1--6. IEEE.

\bibitem[Guo et~al., 2020]{CertifiedDataRemoval}
Guo, C., Goldstein, T., Hannun, A., and Van Der~Maaten, L. (2020).
\newblock Certified data removal from machine learning models.
\newblock In III, H.~D. and Singh, A., editors, {\em Proceedings of the 37th
  International Conference on Machine Learning}, volume 119 of {\em Proceedings
  of Machine Learning Research}, pages 3832--3842. PMLR.

\bibitem[Gupta et~al., 2021]{AdaptiveMachineUnlearning}
Gupta, V., Jung, C., Neel, S., Roth, A., Sharifi-Malvajerdi, S., and Waites, C.
  (2021).
\newblock Adaptive machine unlearning.
\newblock In Ranzato, M., Beygelzimer, A., Nguyen, K., Liang, P.~S., Vaughan,
  J.~W., and Dauphin, Y., editors, {\em Advances in Neural Information
  Processing Systems}, volume~34, pages 16319--16330. Curran Associates, Inc.

\bibitem[Halimi et~al., 2022]{halimi2022federated}
Halimi, A., Kadhe, S., Rawat, A., and Baracaldo, N. (2022).
\newblock Federated unlearning: How to efficiently erase a client in fl?
\newblock {\em arXiv preprint arXiv:2207.05521}.

\bibitem[Harding et~al., 2019]{CCPA}
Harding, E.~L., Vanto, J.~J., Clark, R., Hannah~Ji, L., and Ainsworth, S.~C.
  (2019).
\newblock Understanding the scope and impact of the california consumer privacy
  act of 2018.
\newblock {\em Journal of Data Protection \& Privacy}, 2(3):234--253.

\bibitem[Hardt et~al., 2016]{hardt2016}
Hardt, M., Recht, B., and Singer, Y. (2016).
\newblock Train faster, generalize better: Stability of stochastic gradient
  descent.
\newblock In {\em International conference on machine learning}, pages
  1225--1234. PMLR.

\bibitem[{Harry Hsu} et~al., 2019]{FL_and_CIFAR_dir}
{Harry Hsu}, T.~M., Qi, H., and Brown, M. (2019).
\newblock {Measuring the effects of non-identical data distribution for
  federated visual classification}.
\newblock {\em arXiv preprint arXiv:1909.06335}.

\bibitem[Izzo et~al., 2021]{ApproximateDataDeletion}
Izzo, Z., Anne~Smart, M., Chaudhuri, K., and Zou, J. (2021).
\newblock Approximate data deletion from machine learning models.
\newblock In Banerjee, A. and Fukumizu, K., editors, {\em Proceedings of The
  24th International Conference on Artificial Intelligence and Statistics},
  volume 130 of {\em Proceedings of Machine Learning Research}, pages
  2008--2016. PMLR.

\bibitem[Jin et~al., 2023]{jin2023ntk}
Jin, R., Chen, M., Zhang, Q., and Li, X. (2023).
\newblock Forgettable federated linear learning with certified data removal.
\newblock {\em arXiv preprint arXiv:2306.02216}.

\bibitem[Krizhevsky, 2009]{CIFAR-10}
Krizhevsky, A. (2009).
\newblock Learning multiple layers of features from tiny images.

\bibitem[LeCun et~al., 1998]{}
LeCun, Y., Bottou, L., Bengio, Y., and Ha, P. (1998).
\newblock {LeNet}.
\newblock {\em Proceedings of the IEEE}.

\bibitem[Li et~al., 2020]{OnTheConvergence}
Li, X., Huang, K., Yang, W., Wang, S., and Zhang, Z. (2020).
\newblock On the convergence of fedavg on non-iid data.
\newblock In {\em International Conference on Learning Representations}.

\bibitem[Li et~al., 2021]{li2020online}
Li, Y., Wang, C., and Cheng, G. (2021).
\newblock Online forgetting process for linear regression models.
\newblock In Banerjee, A. and Fukumizu, K., editors, {\em The 24th
  International Conference on Artificial Intelligence and Statistics, {AISTATS}
  2021, April 13-15, 2021, Virtual Event}, volume 130 of {\em Proceedings of
  Machine Learning Research}, pages 217--225. {PMLR}.

\bibitem[Liu et~al., 2021]{FedEraser}
Liu, G., Ma, X., Yang, Y., Wang, C., and Liu, J. (2021).
\newblock Federaser: Enabling efficient client-level data removal from
  federated learning models.
\newblock In {\em 2021 IEEE/ACM 29th International Symposium on Quality of
  Service (IWQOS)}, pages 1--10.

\bibitem[Liu et~al., 2022]{liu2022fedun}
Liu, Y., Xu, L., Yuan, X., Wang, C., and Li, B. (2022).
\newblock The right to be forgotten in federated learning: An efficient
  realization with rapid retraining.
\newblock {\em Proceedings - IEEE INFOCOM}, 2022-May:1749--1758.

\bibitem[Liu et~al., 2015]{CelebA}
Liu, Z., Luo, P., Wang, X., and Tang, X. (2015).
\newblock Deep learning face attributes in the wild.
\newblock In {\em Proceedings of International Conference on Computer Vision
  (ICCV)}.

\bibitem[Mahadevan and Mathioudakis, 2021]{CertifiableMchineUnlearning}
Mahadevan, A. and Mathioudakis, M. (2021).
\newblock Certifiable machine unlearning for linear models.
\newblock {\em arXiv preprint arXiv:2106.15093}.

\bibitem[McMahan et~al., 2017]{FedAvg}
McMahan, B., Moore, E., Ramage, D., Hampson, S., and y~Arcas, B.~A. (2017).
\newblock {Communication-Efficient Learning of Deep Networks from Decentralized
  Data}.
\newblock In Singh, A. and Zhu, J., editors, {\em Proceedings of the 20th
  International Conference on Artificial Intelligence and Statistics},
  volume~54 of {\em Proceedings of Machine Learning Research}, pages
  1273--1282, Fort Lauderdale, FL, USA. PMLR.

\bibitem[Neel et~al., 2021]{DescentToDelete}
Neel, S., Roth, A., and Sharifi-Malvajerdi, S. (2021).
\newblock Descent-to-delete: Gradient-based methods for machine unlearning.
\newblock In Feldman, V., Ligett, K., and Sabato, S., editors, {\em Proceedings
  of the 32nd International Conference on Algorithmic Learning Theory}, volume
  132 of {\em Proceedings of Machine Learning Research}, pages 931--962. PMLR.

\bibitem[Pan et~al., 2023]{pan2022machine}
Pan, C., Sima, J., Prakash, S., Rana, V., and Milenkovic, O. (2023).
\newblock Machine unlearning of federated clusters.
\newblock In {\em The Eleventh International Conference on Learning
  Representations, {ICLR} 2023, Kigali, Rwanda, May 1-5, 2023}. OpenReview.net.

\bibitem[Sommer et~al., 2020]{TowardsProbabilisticVerification}
Sommer, D.~M., Song, L., Wagh, S., and Mittal, P. (2020).
\newblock Towards probabilistic verification of machine unlearning.
\newblock {\em arXiv preprint arXiv:2003.04247}, abs/2003.04247.

\bibitem[Voigt and Von~dem Bussche, 2017]{GDPR}
Voigt, P. and Von~dem Bussche, A. (2017).
\newblock The eu general data protection regulation (gdpr).
\newblock {\em A Practical Guide, 1st Ed., Cham: Springer International
  Publishing}, 10(3152676):10--5555.

\bibitem[Wang et~al., 2022]{wang2021federated}
Wang, J., Guo, S., Xie, X., and Qi, H. (2022).
\newblock Federated unlearning via class-discriminative pruning.

\bibitem[Wang et~al., 2020]{FedNova}
Wang, J., Liu, Q., Liang, H., Joshi, G., and Poor, H.~V. (2020).
\newblock Tackling the objective inconsistency problem in heterogeneous
  federated optimization.
\newblock In Larochelle, H., Ranzato, M., Hadsell, R., Balcan, M., and Lin, H.,
  editors, {\em Advances in Neural Information Processing Systems 33: Annual
  Conference on Neural Information Processing Systems 2020, NeurIPS 2020,
  December 6-12, 2020, virtual}.

\bibitem[Wang et~al., 2023]{BFU-SS}
Wang, W., Tian, Z., Zhang, C., Liu, A., and Yu, S. (2023).
\newblock Bfu: Bayesian federated unlearning with parameter self-sharing.
\newblock In {\em Proceedings of the 2023 ACM Asia Conference on Computer and
  Communications Security}, ASIA CCS '23, page 567–578, New York, NY, USA.
  Association for Computing Machinery.

\bibitem[Wu et~al., 2022]{wu2022federated}
Wu, C., Zhu, S., and Mitra, P. (2022).
\newblock Federated unlearning with knowledge distillation.
\newblock {\em arXiv preprint arXiv:2201.09441}.

\bibitem[Xiao et~al., 2017]{FashionMNIST}
Xiao, H., Rasul, K., and Vollgraf, R. (2017).
\newblock Fashion-mnist: a novel image dataset for benchmarking machine
  learning algorithms.
\newblock {\em arXiv preprint arXiv:1708.07747}, abs/1708.07747.

\end{thebibliography}

\clearpage
\appendix

\section{When fine tuning does not guarantee unlearning: example on linear regression}\label{app:sec:initialization}
Let us consider a linear regression optimization, with feature matrix $\mX$ and predictions $\vy$ such that the loss function $f$ is defined as
\begin{equation}
f(\mX, \vy, \vtheta) = \frac{1}{2} \left[\vy - \mX \vtheta \right]^T \left[\vy - \mX \vtheta \right]
 \label{eq:def_f_example}
.
\end{equation}
In this example, we assume there are more features than data samples, which makes $\mX^T \mX$ a singular matrix. While $f$ is convex, $f$ has more than one global optimum. Any model with parameter $\vtheta^*$ such that 
\begin{equation}
    \mX^T \mX \vtheta^* 
    = \mX^T \vy
\end{equation}
is a global optimum. When $\mX^T \mX$ is non-singular, we retrieve the unique optimum in close-form $\vtheta^* = \left(\mX^T \mX\right)^{-1}\mX^T \vy$. 
We show with this simple example that, upon unlearning a data sample, no amount of fine-tuning on the model $\vtheta^*$ can lead to the same model obtained when retraining from a random initial model. 
We differentiate between $(\mX, \vy)$ and $(\mX_{-1}, \vy_{-1})$ our data with and without a given data point. 

Optimizing $f$, as defined in equation (\ref{eq:def_f_example}), with $N$ steps of gradient descent, learning rate $\eta$, and initial model $\vtheta_0$ gives model parameters $\vtheta^N$ defined as
\begin{equation}
\vtheta^N 
= \underbrace{\left[I - \eta \mX^T \mX\right]^N}_{A(\mX, N)} \vtheta^0 
+ \underbrace{\eta \sum_{n =0}^{N-1} \left[I - \eta \mX^T \mX\right]^n \mX^T \vy}_{B(\mX, \vy, N)}
.
\end{equation}

We first note that we retrieve the standard form for the global optimum of linear regression when $\mX^T \mX$ is non-singular as $\lim_{n\to\infty}A(\mX, n) = 0$ and $\lim_{n\to\infty}B(\mX, \vy, n) = \left(\mX^T \mX\right)^{-1}\mX^T \vy$. In the general form accounting for the singular case, at least one eigenvalue of $A(\mX, N)$ is equal to 1 independently from the amount of gradient descent steps $N$. Hence, the parameters of the model obtained with gradient descent optimization always depend from the ones of the initial model $\vtheta^0$. Hence, when unlearning our data sample from $\vtheta^N$, the resulting trained model still depends of that data samples. 
Indeed, if we compare the model $\vtheta_{-1}^{\tilde{N}}$ trained on the data samples $(\mX_{-1}, \vy_{-1})$,  to  the model  $\phi_{-1}^{\tilde{N}}$ obtained after fine-tuning the model $\vtheta^N$ with $\tilde{N}$ server aggregations, we have
\begin{equation}
    \phi_{-1}^{\tilde{N}} - \vtheta_{-1}^{\tilde{N}}
    = A(\mX_{-1}, \tilde{N}) A(\mX, N) \vtheta^0 
+ A(\mX_{-1}, \tilde{N}) B(\mX,  \vy, N)
.
\end{equation}

\section{Forgetting a Single Client with IFU, Theorem \ref{theo:diff_bound}}
\label{app:sec:proof_diff}
In this section, we provide the proof of Theorem \ref{theo:diff_bound} and derive 3 different results when considering 3 different sets of assumptions: $f$ is smooth, $f$ is smooth and convex, and $f$ is smooth and strongly-convex.

\subsection{Definitions}
We define by $\vtheta^N = \textsc{FedAvg}(I, N)$ and $\vphi^N = \textsc{FedAvg}(I_{-c}, N)$ the models trained with $\textsc{FedAvg}$ initialized at $\vtheta_0$ with respectively all the clients, i.e. $I$, and all the clients but client $c$, i.e. $I_{-c}$, performing $K$ GD steps.

    When clients perform $K=1$ GD steps, two consecutive global models can be related, when training with clients in $I$ as a simple GD step, i.e.
    \begin{equation}
	\vtheta^{n+1}
	= \vtheta^n
	- \eta \nabla f_I(\vtheta^n)
	.
    \end{equation}

Let us define the gradient step operator for function $f$ at learning rate $\eta$:
$$
G(f, \eta, \vtheta) = \vtheta - \eta \nabla f(\vtheta)
$$

\subsection{General case}

\subsubsection{Main observation}

The following results use Lemma 3.7 of \cite{hardt2016} under its 3 possible hypothesis. Let us first notice that, with $K=1$ and without any hypothesis on $f$ besides its differentiability, we have:

\begin{align*}
\begin{split}
\phi^{n+1} - \theta^{n+1}  
&= \phi^n - \theta^n - \eta[\nabla f_{I\setminus \{c\}}(\phi^n) \\
&\hphantom{aa} - \nabla f_{I\setminus \{c\}}(\theta^n) + \nabla f_{I\setminus \{c\}}(\theta^n) - \nabla f_I (\theta^n)] \\
&= G(f_{I\setminus \{c\}}, \eta, \phi^n) - G(f_{I\setminus \{c\}}, \eta, \theta^n) \\
&\hphantom{aa} + \eta (\nabla f_{I\setminus \{c\}}(\theta^n) - \nabla f_I (\theta^n))  \\
\end{split}
\end{align*}

Then, depending on the assumptions made on $f$,  we get 3 different results, all taking the same form:
\begin{align}
\begin{split}
    \lVert \phi^{n+1} - \theta^{n+1} \rVert 
    &\le B(f, \eta)\lVert \phi^n - \theta^n \rVert \\
    &+ \eta \lVert \nabla f_{I\setminus \{c\}}(\theta^n) - \nabla f_I (\theta^n) \rVert \\
\end{split}
\end{align}
Where we consider 3 distinct cases, each with their respective assumptions and definition of $B$:
\begin{enumerate}
    \item If $f_i$ is $\beta$-smooth for every $i \in I$, then
    \begin{equation}\label{eq:B1}
        B(f_I, \eta) = 1 + \eta.\beta
    \end{equation}
    \item If $f_i$ is $\beta$-smooth and convex for every $i \in I$ and $\eta \le 2 / \beta$, then
    \begin{equation}\label{eq:B2}
        B(f, \eta) = 1
    \end{equation}
    \item If $f_i$ is $\beta$-smooth and $\mu$-strongly-convex for every $i \in I$ and $\eta \le \frac{2}{\beta + \mu}$, then
    \begin{equation}\label{eq:B3}
        B(f, \eta) = 1 - \frac{\eta\beta\mu}{\beta+\mu}
    \end{equation}
\end{enumerate}

\subsubsection{Generic proof}

Let us prove the desired results with a generic function $B$. The specific results in the 3 different cases will then be derived directly by specifying $B$ depending on the hypothesis.

Let $p_i = \frac{N_i}{N_I}$ and $q_i = \frac{p_i}{1-p_c}\cdot(1-\mathbbm{1}_{c}(i))$. Then,

\begin{align*}
    \lVert \phi^{n+1} - \theta^{n+1} \rVert
    &= \Big\lVert \sum\limits_{i=1}^{M} q_i\phi_i^{n+1} - \sum\limits_{i=1}^{M} p_i\theta_i^{n+1} \Big\rVert \\
    &= \Big\lVert \sum\limits_{i=1}^{M} q_i(\phi_i^{n+1}-\theta_i^{n+1})\\ 
    &\hphantom{aa} + \underbrace{\sum\limits_{i=1}^{M} q_i(\theta_i^{n+1}-\theta^{n}) - \sum\limits_{i=1}^{M} p_i(\theta_i^{n+1} - \theta^n)}_{\Delta_c (I, \theta^n)} \\
    &\le \sum\limits_{i=1}^M q_i\lVert \phi_i^{n+1}-\theta_i^{n+1}\rVert + \Delta_c (I, \theta^n) \\
    &\le \max_{i \in I} \lVert \phi_i^{n+1}-\theta_i^{n+1}\rVert + \Delta_c (I, \theta^n)
\end{align*}

where the last inequality follows from the fact that $\sum q_i = 1$.

Now, for any $i \in I$, let us give an upper bound for $\lVert \phi_i^{n+1}-\theta_i^{n+1}\rVert$:

\begin{equation}\label{eq:boundB}
\begin{aligned}
    \lVert \phi_i^{n, k+1} - \theta_i^{n, k+1} \rVert &= \lVert G(f_{\{i\}}, \eta, \phi_i^{n, k}) - G(f_{\{i\}}, \eta, \theta_i^{n, k}) \rVert\\
    &\le  B(f, \eta)\cdot\lVert \phi_i^{n, k} - \theta_i^{n, k} \rVert
\end{aligned}
\end{equation}
where the last inequality follows from the contractivity of one step of gradient descent (see Lemma 3.7 of \citet{hardt2016} for all 3 cases).
By applying this equation recursively $K$ times, we get 
\begin{equation}\label{eq:boundB2}
\lVert \phi_i^{n+1} - \theta_i^{n+1} \rVert \le  B(f, \eta)^K \cdot \lVert \phi_i^{n} - \theta_i^{n} \rVert    
\end{equation}

 From \Eq{boundB} and \Eq{boundB2}, we get:
 \begin{equation}
     \lVert \phi^{n+1} - \theta^{n+1} \rVert \le B(f, \eta)^K \cdot\lVert \phi^{n} - \theta^{n} \rVert + \Delta_c(I, \theta^n)
 \end{equation}

Now, let us prove via recurrence that:
\begin{equation}\label{eq:recurrence}
    \lVert \phi^{n} - \theta^{n} \rVert \le \sum\limits_{p=0}^{n-1} B(f, \eta)^{(n-p-1)K}\cdot \Delta_c(I, \theta^p)
\end{equation}

The initialization is trivial since $\phi^0 = \theta^0$.

We provide the proof of the recurring property in equation (\ref{eq:proof_req}), thus verifying equation (\ref{eq:recurrence}) and proving Theorem \ref{theo:diff_bound}.
\begin{equation}\label{eq:proof_req}
\begin{aligned}
    \lVert \phi^{n+1} - \theta^{n+1} \rVert &\le B(f, \eta)^K \cdot \\
    &\Biggr[\sum\limits_{p=0}^{n-1} B(f, \eta)^{(n-p-1)K} \Delta_c(I, \theta^p)\Biggr]\\
    &\quad + \Delta_c (I, \theta^n) \\
    &\le \sum\limits_{p=0}^{(n+1)-1} B(f, \eta)^{(n+1-p-1)K}\cdot \Delta_c(I, \theta^p)
\end{aligned}
\end{equation}

Let us now give the specific formulations for each set of assumptions.

\subsection{Case $f$ smooth, not necessarily convex}

Let $f$ be $\beta$-smooth. In this case, the result from \Eq{recurrence} applies with $B$ as described in \Eq{B1}. Therefore:
\begin{equation}
\lVert \phi^{n} - \theta^{n} \rVert \le \sum\limits_{p=0}^{n-1} (1+\eta \beta)^{(n-p-1)K}\cdot \Delta_c(I, \theta^p)
\end{equation}

\subsection{Case $f$ smooth $\And$ convex}
Let $f$ be convex and $\beta$-smooth, let $\eta \le \frac{2}{\beta}$. In this case, the result from \Eq{recurrence} applies with $B$ as described in \Eq{B2}. Therefore:
\begin{equation}
\lVert \phi^{n} - \theta^{n} \rVert \le \sum\limits_{p=0}^{n-1}  \Delta_c(I, \theta^p)
\end{equation}

\subsection{Case $f$ smooth $\And$ strongly convex}
Let $f$ be $\mu$-strongly convex and $\beta$-smooth, let $\eta \le \frac{2}{\beta + \mu}$. In this case, the result from \Eq{recurrence} applies with $B$ as described in \Eq{B3}. Therefore:
\begin{equation}
\lVert \phi^{n} - \theta^{n} \rVert \le \sum\limits_{p=0}^{n-1} (1 - \frac{\eta\beta\mu}{\beta+\mu})^{(n-p-1)K}\cdot \Delta_c(I, \theta^p) 
\end{equation}

\section{Unlearning certification, Proof of Theorem \ref{theo:noise_DP}}
\label{app:sec:proof_DP}

\begin{proof}
According to theorem A.1 of \cite{DP_book}, the Gaussian mechanism with sensitivity $\alpha$, is $(\epsilon, \delta)$-Differentially Private if its noise parameter $\sigma$ verifies :

    \begin{equation}
        \sigma
        > \left[2 \left( \ln(1.25) - \ln(\delta) \right) \right]^{1/2} \epsilon^{-1} \alpha.
    \end{equation}

In our case, the sensitivity with respect to any client $c$ at step $n$ is $\alpha(n, c)$ by construction. Additionally, Theorem \ref{theo:diff_bound} provides us with the bound 

\begin{equation}
    \alpha(n, c) \le \Psi(n, c).
\end{equation}

Thus, picking any noise $\sigma(n, c)$ such that

    \begin{equation}
        \sigma(n, c)
        > \left[2 \left( \ln(1.25) - \ln(\delta) \right) \right]^{1/2} \epsilon^{-1} \Psi(n, c)
    \end{equation}
ensures the differential privacy of our forgetting mechanism with respect to the unlearned clients. This concludes the proof.
\end{proof}
\section{Convergence of SIFU, Theorem \ref{theo:zeta}}
\label{app:sec:SIFU_convergence}

\begin{figure*}[ht]
\begin{centering}
    \includegraphics[width=\linewidth]{./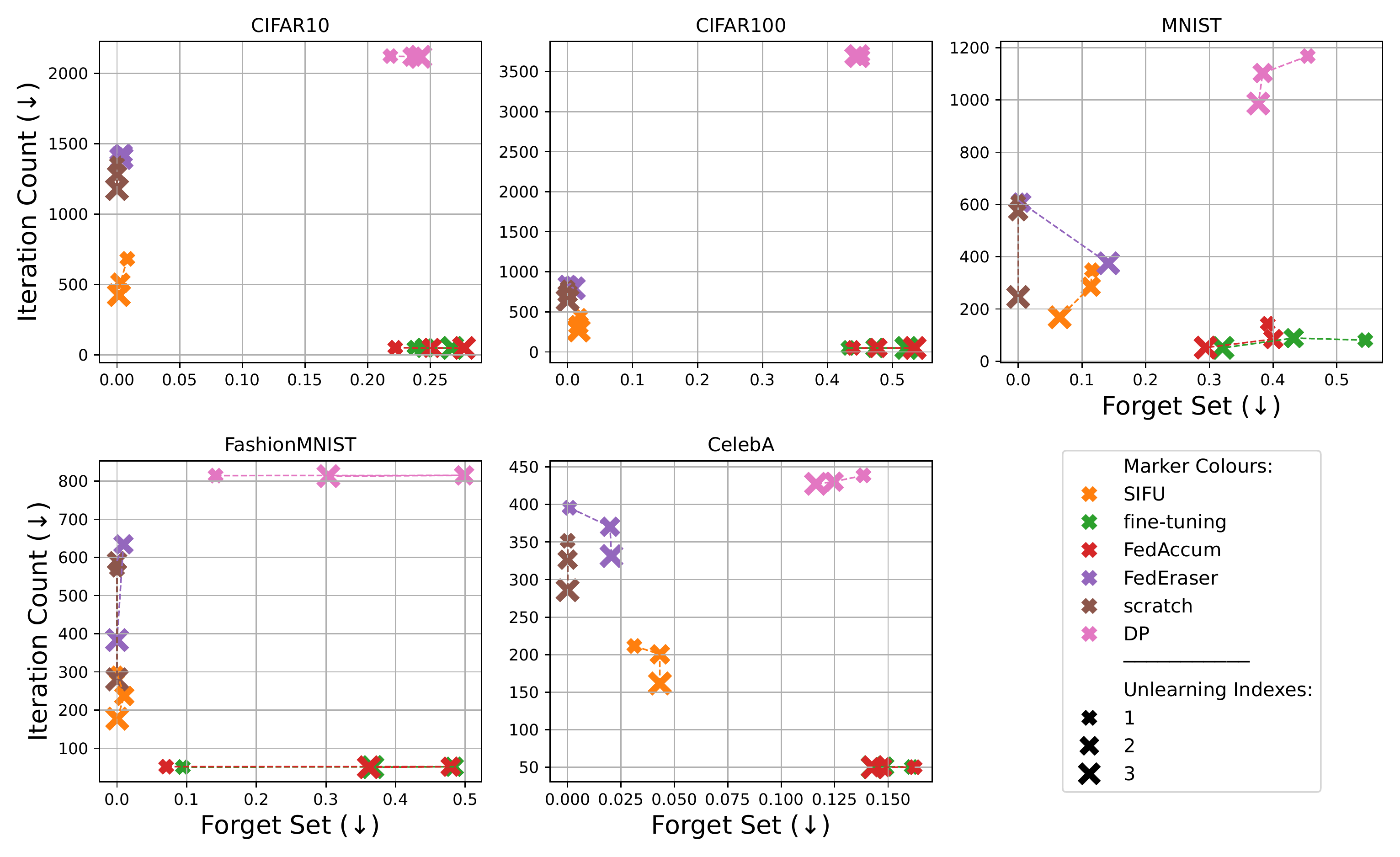}
\end{centering}
\caption{
Total amount of aggregation rounds (1\textsuperscript{st} row) and model accuracy of unlearned clients (2\textsuperscript{nd} row) for MNIST, FashionMNIST, CIFAR10, CIFAR100, and CelebA (the lower the better). 
The server runs a federated routine with $M=100$ clients, and unlearns 10 of them at each unlearning request ($U=3$). 
Results are reported with variability estimated on 10 seeds.
}
	\label{fig:SIFU}
\end{figure*}



    
        


\subsection{Proof of Theorem \ref{theo:zeta}}

\begin{proof}

    We prove by induction that $\vtheta_{u+1}^0$ $(\epsilon, \delta)$-unlearns every client in $F_{u+1} = \cup_{s=1}^{u+1} W_s = F_u \cup W_{u+1}$.
    The initialization ($u=1$) directly follows from IFU, Algorithm \ref{alg:unlearning_ours}, with Theorem \ref{theo:noise_DP}. With reference to equation (\ref{eq:theta_T_zeta}), we now assume that for every unlearning request $u\le u$, the perturbed model $\vtheta_u^0$ $(\epsilon, \delta)$-unlearns every client in $F_u$, and prove that $\vtheta_{u+1}^0$ $(\epsilon, \delta)$-unlearns every client in $F_{u+1}$.
    
    \begin{itemize}
        \item  Case 1: {$\forall u \leq u$, $n_u \leq n_{u+1} $}.    
        The model $\vtheta_{\zeta_{u+1}}^{T_{u+1}} = H(u)[n_{u+1}]$ appears later in the training history than models $\vtheta_{\zeta_{u}}^{T_{u}}$ and, thanks to the induction property,  provides $(\epsilon, \delta)$-unlearning of every client in $F_u$. Thus, the model  $\vtheta_{u+1}^{0}$ guarantees the unlearning of every client in $F_{u+1}$. 

        \item Case 2: $\exists\text{ unlearning request } u^* \leq u \text{ such that } n_{u+1} < n_{u*}$. 
        By construction of the training history, the sequence $H(u^*)$ contains the model $\vtheta_{\zeta_{u+1}}^{T_{u+1}} = H(u^*)[n_{u+1}] = H(u)[n_{u+1}]$, which appears earlier than model $\vtheta_{\zeta_{u^*}}^{T_{u*}} = H(u^*)[n_{u^*}]$. Perturbing the model $\vtheta_{\zeta_{u+1}}^{T_{u+1}}$ with noise  $N(\mathbf{0},\sigma\mI_\vtheta$), guarantees $(\epsilon, \delta)$-unlearning of the clients in $W_{u^*}$, since  $$ \Psi_{u^*}(n_{u+1},c^*) \leq \Psi_{u^*}(n_{u^*},c^*) \leq \Psi^*, $$ 
        for every client $c^*$ in $W_{u*}$. By extending this reasoning to all learning requests $u$ such that $n_{u+1} < n_{u}$, and by the induction property for the remaining ones, the model  $\vtheta_{u+1}^{0}$ guarantees the unlearning of every client in $F_{u+1}$.  

\end{itemize}

\end{proof}

\section{Experiments}
\label{app:sec:experiments}

For every benchmark, we consider the number of SGD steps $K$, batch size $B$, number of clients $M$, the number of sampled clients $m$, the standard deviation $\sigma$ of the noise perturbation, and the local learning rate $\eta$ given in Table \ref{table:hyperparameters}. 
Also, for our unlearning scheme \textsc{SIFU}, \textsc{DP}, and \textsc{Last}, we consider an unlearning budget of $\epsilon=10$ and $\delta = 0.01$. The unlearning budget plays the important role of identifying in the training history the global model to perturb. Theorem \ref{theo:noise_DP} shows that $\epsilon$ and $\sigma$ are linearly related. Hence, to unlearn a client $c$ from a global model $c$, a smaller $\sigma$ can be considered, but at the cost of a higher unlearning budget $(\epsilon, \delta)$, Definition \ref{def:DP_adpated}.
Also, for fair comparison of DP with other FU schemes, we select the best clipping value $C$, in a range from 0.001 to 1, for which the global model reaches the target accuracy in the smallest amount of aggregation rounds.
Finally, for FashionMNIST, CIFAR10, CIFAR100, and CelebA, we consider model architectures composed of three convolutional layers followed by two fully connected layers, with implementation at \href{https://github.com/Accenture/Labs-Federated-Learning/tree/SIFU}{this GitHub url}.

\begin{table}[htb]
\caption{Hyperparameters used for our different unlearning benchmarks described in Section \ref{subsec:experimental_setup}.} \label{table:hyperparameters}
\begin{center}
\begin{tabular}{|l|c|c|c|c|c|c|c|}
    \hline
    \textbf{Dataset} & $K$ & $B$ & $M$ & $m$ & $\sigma$ & $\eta$ & $C$
    \\\hline\hline

    CIFAR10 & 5 &  20 & 100 & 5 & 0.05 & 0.01 & 0.2
    \\ \hline
    CIFAR100 & 5 & 20 & 100 & 5 & 0.05 & 0.02 & 0.2
    \\ \hline
    MNIST &  10 & 100 & 100 &  10 &  0.05 & 0.01 & 0.5
    \\ \hline
    FMNIST & 5 & 20 & 100 & 10 & 0.1 & 0.02 & 0.5
        \\  \hline
    CelebA & 10 & 20 & 100 & 20 & 0.1 & 0.01  & 0.5

    \\ \hline
\end{tabular}
\end{center}
\end{table}

The training and retraining depends on the initial model $\vtheta_0^0$ and the clients' batches of data used at every aggregation to compute their local SGDs. Hence, we replicate each unlearning scenario on 10 different seeds and plot in Figure \ref{fig:SIFU} to \ref{fig:forgetting_many_class} their averaged results. For the unlearning benchmarks described in Section \ref{subsec:experimental_setup} and used in Figure \ref{fig:SIFU}, to \ref{fig:forgetting_many_class}, the stopping accuracies considered are 93\% for MNIST, 90\% for FashionMNIST, CIFAR10, and CIFAR100, and 99.9\% for CelebA. 

We provide several figures to further the experimental evaluation of our method.

 We define the set of clients requesting unlearning as:
\begin{equation}
    F_u 
    = \cup_{s=1}^u W_s
    .
    \label{eq:F_u}
\end{equation}

In our experimental scenario, we have $|F_0|=0$ during training, and $|F_1|=10$, $|F_2|=20$, and $|F_3|=30$ after each unlearning request.
We consider this setting both within an usual and an adversarial scenario with backdoored data (as proposed in \cite{TowardsProbabilisticVerification}).
In Figure \ref{fig:SIFU_backdoored} and \ref{fig:SIFU}, we compare the performances of SIFU with other methods from the literature. Rather than operating within a limited time budget, we fine-tune after the unlearning until a certain accuracy threshold is reached. Thus, all compared methods have equivalent performances on the retain set and the evaluated quantities are now forget set accuracy and unlearning (+ fine-tuning) budge, as described in Sec \ref{subsec:experimental_setup}.
Figure \ref{fig:SIFU} illustrates the computational cost (1st row), and unlearning capabilities (2ns row) of the tested FU methods across dataset. We note that SIFU requires a sensibly lower number of iterations than \textsc{Scratch} (52\% faster on average) to achieve similar unlearning performances. \textsc{FedEraser} also provides comparable unlearning capabilities, while however requiring a higher number of iterations than \textsc{Scratch} (12\% on average). The other approaches are generally associated with poor unlearning results, independently from the required computational cost. 
We notice that the model accuracy of SIFU is slightly higher than the one of \textsc{Scratch}, with overlap only for FashionMNIST. This behavior is natural and can be explained by the privacy budget $(\epsilon, \delta)$ trading unlearning capabilities for retraining cost. With the highest unlearning budget, i.e. $\epsilon =1$ and $\delta=0$, SIFU would require to retrain from the initial model $\vtheta_0^0$, thus reducing to \textsc{Scratch}.

The poor unlearning performance of \textsc{DP} can be explained by the fact that it provides privacy guarantees with respect to every client, while FU only aims at removing the contribution of a few specific clients. 

As observed previously, only SIFU and \textsc{FedEraser} provide satisfactory unlearning, but SIFU is significantly faster than its counterpart: \textsc{FedEraser} is even slower than \textsc{Scratch} and is thus not a relevant unlearning method in our experimental scenario.

Finally, when unlearning with \textsc{Last}, we observed that the model always converged to a local optimum with accuracy inferior to our target. This behavior is likely due to the magnitude of the noise being added to guarantee unlearning. Hence, we decided to exclude \textsc{Last} from Figure \ref{fig:SIFU}.

 Experiments were performed using a 1080TI GPU from Nvidia.

\begin{figure*}[htb]
\begin{centering}
    \includegraphics[width=\linewidth]{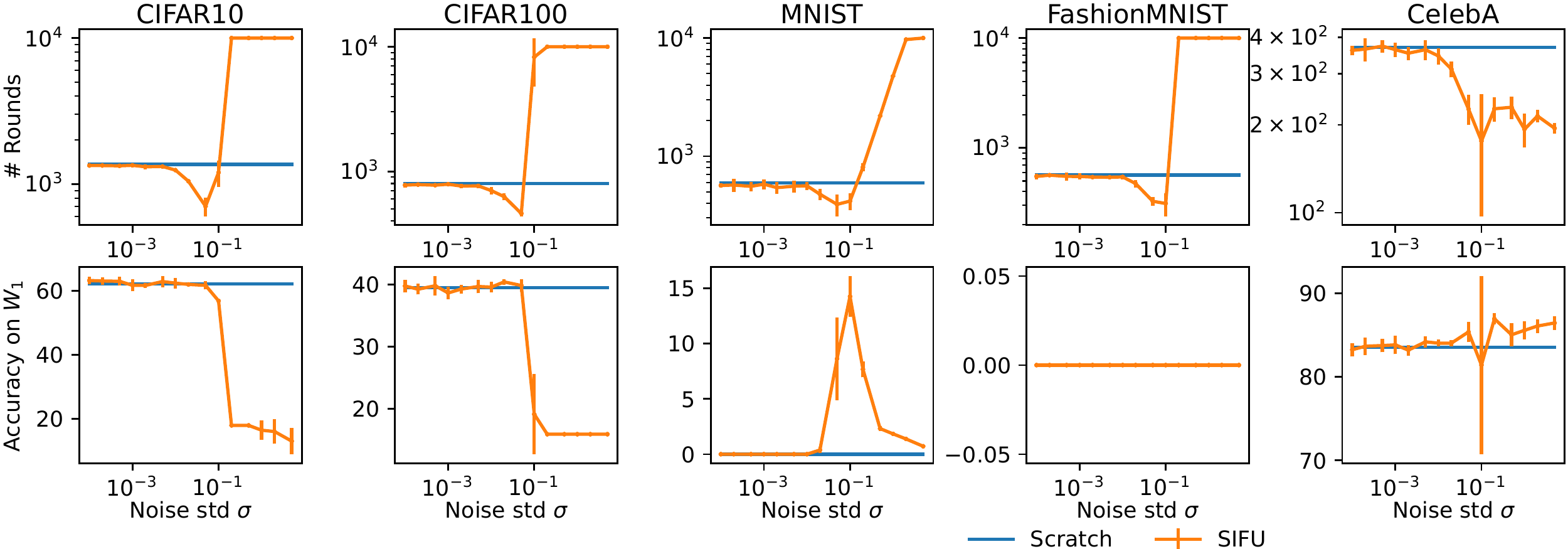}
\end{centering}
\caption{Impact of the noise standard deviation $\sigma$ when unlearning with SIFU for the unlearning budget $(\epsilon, \delta) = (10, 0.01)$. Total amount of aggregation rounds (1\textsuperscript{st} row) and model accuracy of unlearned clients (2\textsuperscript{nd} row) for MNIST, FashionMNIST, CIFAR10, CIFAR100, and CelebA (the lower the better). Speed-ups at optimal sigma are between two-fold and five-fold.
}
\label{fig:impact_std}
\end{figure*}

\begin{figure*}[ht]
\begin{centering}
\includegraphics[width=\linewidth]{./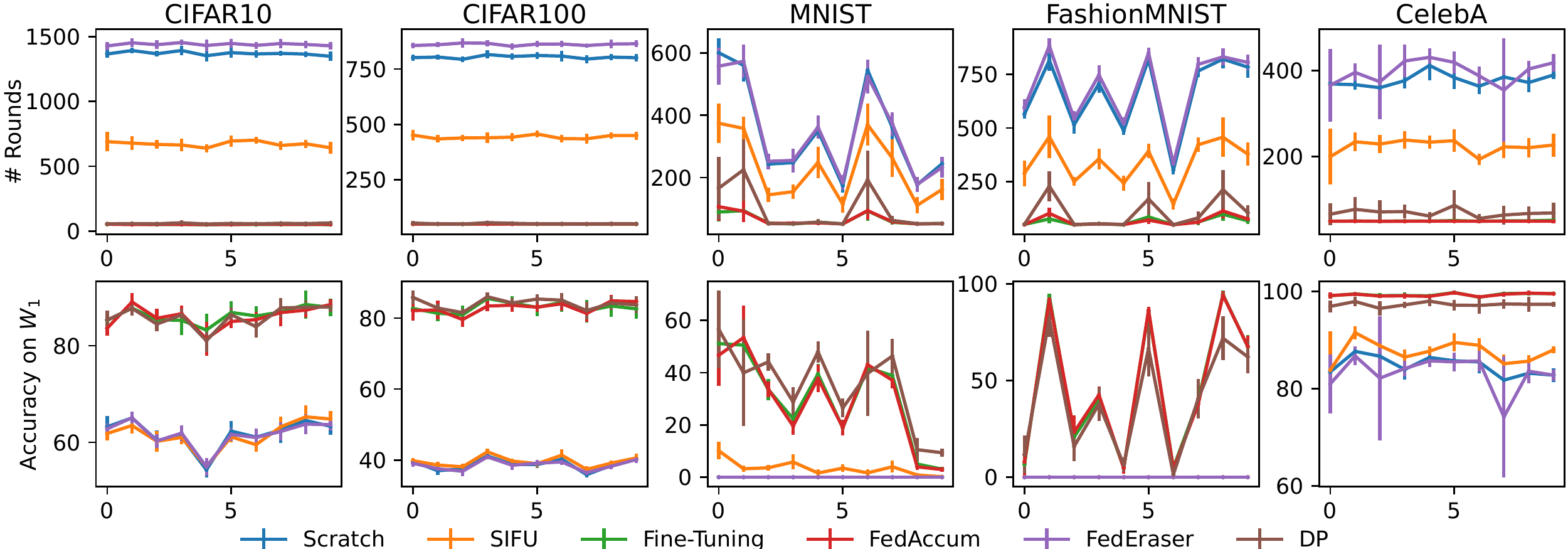}
\end{centering}
\caption{
Total amount of aggregation rounds (1\textsuperscript{st} row) and model accuracy of unlearned clients (2\textsuperscript{nd} row) for MNIST, FashionMNIST, CIFAR10, CIFAR100, and CelebA (the lower the better).
This figure displays the unlearning capabilities of the unlearning benchmarks introduced in Section \ref{subsec:experimental_setup} after training on clients in $I$ and unlearning $|W_1| = 10$ clients. For each integer on the x-axis, a different set of clients to unlearn is considered. 
Each unlearning request is composed of 10 random clients for CIFAR10, CIFAR100, and CelebA. For MNIST and FashionMNIST, each unlearning request $|W_1|$ has 10 clients of the same class such that the x-axis is the class forgotten. The integers on the x-axis corresponds to the class of the clients to unlearn. We retrieve the same conclusions made in \ref{fig:SIFU}: SIFU is the only unlearning method offering an unlearning speed-up while maintaining an accuracy close to \textsc{Scratch} on unlearned clients. DP seems very fast since the displayed number of rounds does not include the initial training.
}
\label{fig:forgetting_many_class}
\end{figure*}

\label{app:sec:FedAccum}

\section{Bound from theorem \ref{theo:diff_bound}}
\label{app:sec:B_justif}

To investigate whether it is legitimate to pick $B(f, \eta) = 1$ for our experiments, we empirically measure $\alpha$ and $\Psi$ for all our datasets with a large set of hyper-parameter choices, ranging from the ones used in the experiments to less adapted ones, even included some that do not allow convergence. The bound from Theorem \ref{theo:diff_bound} holds for every experimental scenario we tried. The results are displayed in figure \ref{app:fig:alpha_psi}.

\begin{figure*}
    \centering
       \
        \includegraphics[width=0.8\textwidth]{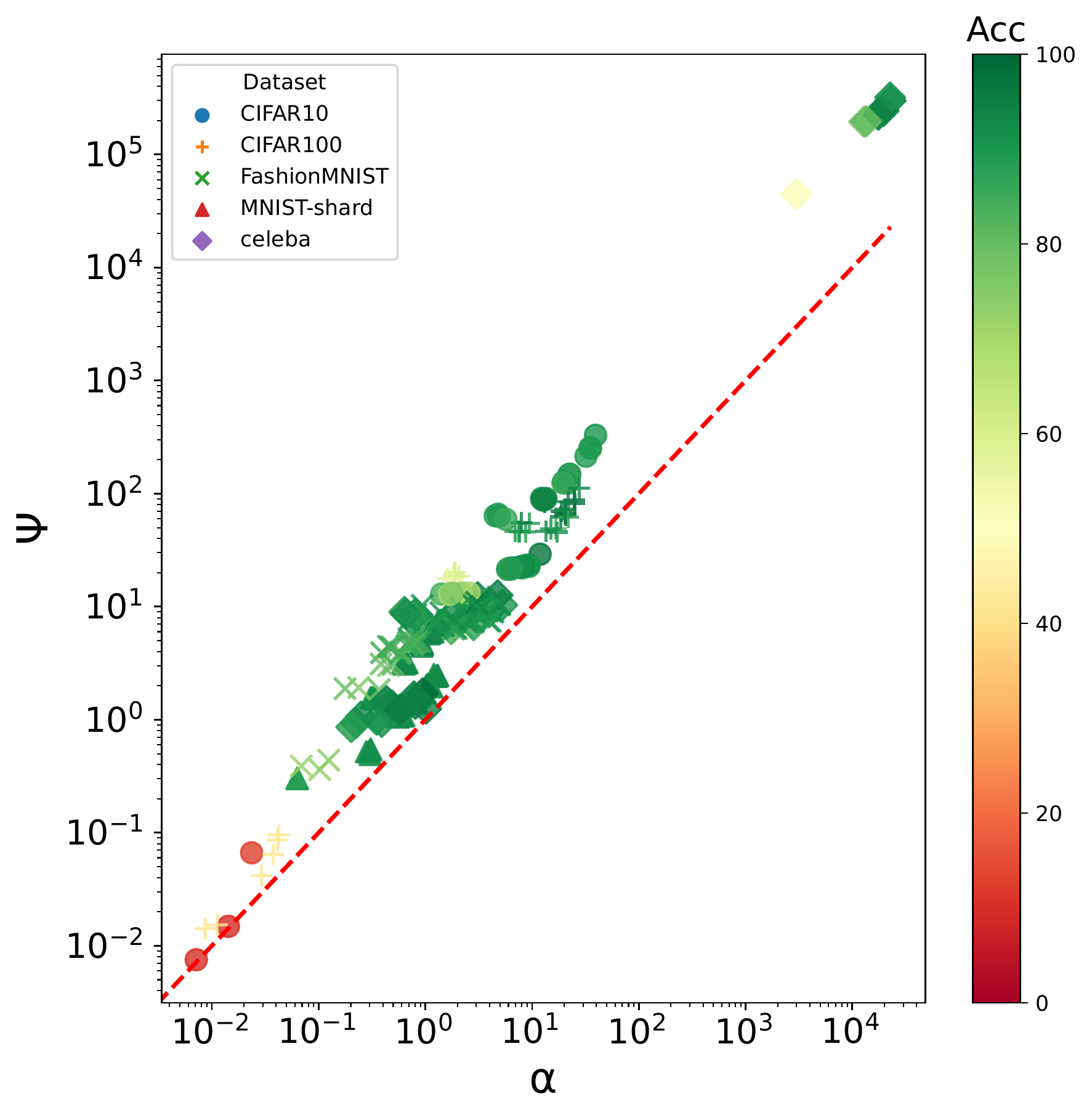}

    \caption{Comparison between $\Psi$ and $\alpha$ for $B=1$ for all datasets and various hyper-parameters. We observe that the bound demonstrated in Theorem \ref{theo:diff_bound} holds for all the considered experimental scenarios, even when setting $B=1$ in neural networks.}
\label{app:fig:alpha_psi}
\end{figure*}

\newpage

\end{document}